\crefname{section}{Sec.}{Secs.}
\Crefname{section}{Section}{Sections}
\Crefname{table}{Table}{Tables}
\crefname{table}{Tab.}{Tabs.}
\newtheorem{theorem}{Theorem}[section]
\newcommand{\B}[1]{\textbf{#1}}
\newcommand{\I}[1]{\textit{#1}}
\newcommand{\s}[0]{\small}
\newcommand{\bt}[0]{\vartriangleright}
\newcommand{\lc}[1]{\multicolumn{1}{|l|}{#1}}
\title{Towards Improved Proxy-based Deep Metric Learning via Data-Augmented Domain Adaptation}
\author{
    %Authors
    % All authors must be in the same font size and format.
    Li Ren,
    Chen Chen,
    Liqiang Wang,
    Kien Hua
}
\title{My Publication Title --- Single Author}
\author {
    Author Name
}
\title{My Publication Title --- Multiple Authors}
\author {
    % Authors
    First Author Name\textsuperscript{\rm 1,\rm 2},
    Second Author Name\textsuperscript{\rm 2},
    Third Author Name\textsuperscript{\rm 1}
}
\begin{document}

\maketitle

%%%%%%%%% ABSTRACT
\begin{abstract}
	
Deep Metric Learning (DML) plays an important role in modern computer vision research, where we learn a distance metric for a set of image representations. Recent DML techniques utilize the \textit{proxy} to interact with the corresponding image samples in the embedding space. However, existing proxy-based DML methods focus on learning individual proxy-to-sample distance while the overall distribution of samples and proxies lacks attention. In this paper, we present a novel proxy-based DML framework that focuses on aligning the sample and proxy distributions to improve the efficiency of proxy-based DML losses. Specifically, we propose the \B{D}ata-\B{A}ugmented \B{D}omain \B{A}daptation (\B{DADA}) method to adapt the domain gap between the group of samples and proxies. To the best of our knowledge, we are the first to leverage domain adaptation to boost the performance of proxy-based DML. We show that our method can be easily plugged into existing proxy-based DML losses. Our experiments on benchmarks, including the popular CUB-200-2011, CARS196, Stanford Online Products, and In-Shop Clothes Retrieval, show that our learning algorithm significantly improves the existing proxy losses and achieves superior results compared to the existing methods. Our code is available at \url{https://github.com/Noahsark/DADA}
	
\end{abstract}

\begin{figure}[pt]
	\centering
	\includegraphics[width=0.45\textwidth,height=0.20\textheight]{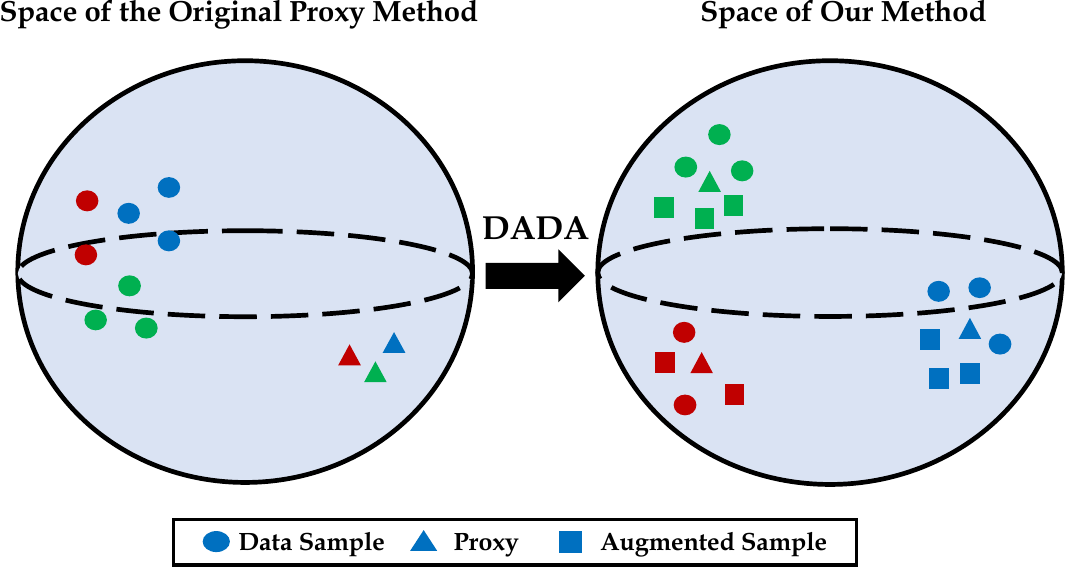}
	\caption{The intuition of our Data-Augmented Domain Adaptation (DADA). The classes are labeled with unique colors. The initial distribution gap between the data samples and corresponding proxies causes ambiguity for proxy-based deep metric learning. Our proposed method solves this problem by aligning the data samples and proxies, assuming they are from different data domains. We further augment the data to a dense manifold with mixed features to support this alignment.}
	\label{basic}
\end{figure}

%%%%%%%%% BODY TEXT
\section{Introduction}
\label{sec:intro}

%% introduce DML
The fundamental task of Deep Metric Learning (DML) focuses on learning deep representation with a known similarity metric. DML is a crucial topic in computer vision since it has a wide range of applications, including image retrieval \cite{lee2008rank,yang2018retrieving,ren2021beyond}, person re-identification \cite{yi2014deep, wojke2018deep, dai2019batch}, and image localization \cite{lu2015localize,ge2020self}. Modern DML techniques utilize deep neural networks (DNN) to project image samples into a hidden space where similar data points are grouped within short distances while the dissimilar points are separated. The majority of DML approaches focus on optimizing the similarities between pairwise samples with various loss functions, ranging from contrastive losses \cite{hadsell2006dimensionality}, triplet losses \cite{schroff2015facenet} to cross-entropy losses \cite{boudiaf2020unifying}. With the increasing number of samples in the deep learning tasks, the basic pair or triplet losses face the difficulty of high computational complexity. Some approaches select informative samples by mining the hard or semi-hard samples \cite{wu2017sampling,katharopoulos2018not} while another group is devoted to comparing the sample clusters \cite{oh2017deep} or the statistics of the samples \cite{rippel2015metric}.

%% why proxy DML
Unlike the pair-based DML methods, the proxy-based approaches try to learn a group of trainable vectors, named \I{proxy}, instead of sweeping all sample pairs within the mini-batch or cluster \cite{movshovitz2017no,kim2020proxy}. Thus, the proxies capture the semantic information about the classes and optimize the uninformative sample-sample comparison with the proxy-sample relations. Based on the efficient proxy-sample distance metrics, later works further select the most informative proxy \cite{zhu2020fewer} or assign each class with multiple proxies \cite{qian2019softtriple} to capture the intra-class structures. However, those existing proxy-based approaches simply guide the proxies by measuring their similarity with data samples where the learning process still faces a fundamental problem: \I{the colossal distribution gap between the proxies and the data samples}, since the proxies are initially sampled from a normal distribution that does not contain any semantic information. The distribution gap would slow the convergence speed and cause ambiguity and bias in the learning process. Initializing the proxy with representations of the data sample is one straightforward solution to this problem. However, the distribution of proxies still differs dramatically between the early and late training stages due to the poor quality of sample representations at the early stage. Additionally, it takes a significant amount of extra time and space to calculate the representations for every class in each iteration.

%% our method part 1
In this paper, we introduce a novel framework to solve these problems by aligning the distributions of the proxies and the data samples (as illustrated in Figure \ref{basic}). Specifically, we utilize Adversarial Domain Adaptation \cite{wang2018deep} techniques to minimize their distribution gap. To align those distributions, we propose a \I{domain-level discriminator}, which is a classifier to separate their domain properties. Note that the single domain discriminator would cause \I{mode collapse} \cite{goodfellow2020generative,che2016mode} where the majority of data points are constrained to a local area so that their discriminative information is lost. To endorse their discriminative information, we leverage one additional \I{category-level discriminator} to evaluate the consistency of their class properties. We show that with these discriminators, the adversarial training signal can efficiently align the distributions of the data samples and the proxies.

%% our method part 2
However, there are still two difficulties in learning the distribution of the proxy space: (1) the limited number and diversity of the proxies and (2) the large initial gap between the proxies and the data samples. The limited number of proxies causes difficulties for discriminators in capturing the inter-class manifold structure, and the large domain gap further hinders their learning efficiency. To overcome these challenges, we propose a novel data-augmented domain as a bridge where the data samples and the proxies are evenly mixed to conduct an intermediate domain. This domain contains rich mixing samples holding information and statistics from both sides. We also propose to create mixture samples within the same categories to increase the density of the manifold. We demonstrate the mechanisms of our method in Figure  \ref{fig:explain}. Our experiments show that the proposed method can easily plug into existing proxy-based losses to boost their performance dramatically. Our main contributions are three-fold:

\setlist{nolistsep}
\begin{itemize}[noitemsep,leftmargin=*] 
	
	\item We propose a novel adversarial learning framework to optimize the existing proxy-based DML by aligning the overall distributions of the data samples and the proxies at both domain and category levels. 
	
	\item We propose an additional data-augmented domain that contains mixup representations from both sides to further bridge the distribution gap. We show that our combined discriminators efficiently guide the proxies and the data samples to a hidden space under the same distribution, in which the proxy-based loss significantly increases its learning efficiency.
	
	\item Our experiments demonstrate the effectiveness of our adversarial adaptation method on the image data samples and the proxies. We show that our approach increases the performance of existing proxy-based DML loss by a large margin, and our best result outperforms the state-of-the-art methods on four popular benchmarks.

\end{itemize}
	\begin{figure*}[pt]
		\centering
		\includegraphics[width=0.90\textwidth,height=0.30\textheight]{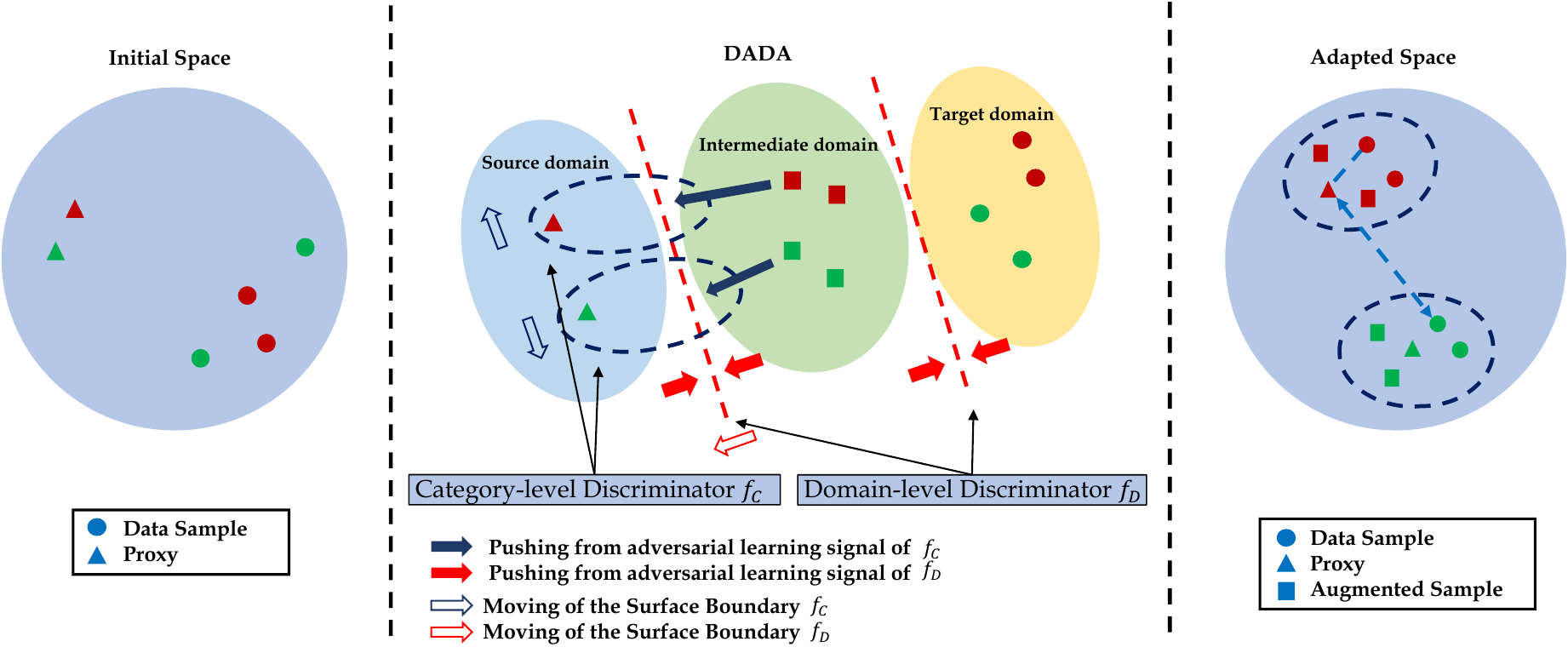}
		\caption{Demonstrate the mechanisms of our adversarial learning. Each class is labeled with a unique color. \B{Left}: Illustrate the Initial Space. \B{Mid}: Illustrate the training mechanisms and progress of our proposed method. \B{Right}: Illustrate the Adapted Space after training. The surface boundaries of the classifiers are trained to discriminate the domains with Domain-level Discriminators, and sample classes with Category-level Discriminators in the \I{discriminator training phase}. In the \I{generator training phase}, the samples and proxies are pushed to fool the Domain-level Discriminators from the adversarial learning signals while the class predictions from Category-level Discriminators are maintained.}
		\label{fig:explain}
	\end{figure*}

%-------------------------------------------------------------------------
\section{Related Work}

%\subsection
\textbf{Pair-based DML.} Metric Learning in the computer vision area aims to learn a metric that measures the distance between a pair of image samples. Initially, the image samples inside a class and out of a class are regarded as \textit{positive} and \textit{negative} samples; and they are learned and projected to a low dimensional space \cite{hadsell2006dimensionality, oh2016deep}. The samples in different classes are paired and measured with the \textit{contrastive loss} \cite{chopra2005learning, hadsell2006dimensionality}. To further compare the ranking relation between pairs of samples, an additional sample is selected as an anchor to compare with both positive and negative samples with the \textit{triplet loss} \cite{weinberger2009distance,wang2014learning,cheng2016person,hermans2017defense} where the positive sample is ensured to be closer than the negative samples. Based on the triple loss, Sohn et al.\shortcite{sohn2016improved} propose SoftMax cross-entropy to compare the group of pairs to improve pair sampling. 

The computational cost of these pair-based works is always high due to the workload of comparing each sample with all other samples within a given batch. Additionally, these methods reveal sensitivity to the size of the batch, where their performance may significantly drop if the size is too small.

%\subsection
\textbf{Proxy-based DML.} To further accelerate the sampling and clustering process, Movshovitz et al. \shortcite{movshovitz2017no} leverage the \textit{proxy}, a group of learnable representations, to compare data samples via the Neighbourhood component analysis (NCA) loss \cite{roweis2004neighbourhood}. The motivation is to set image samples as anchors to compare with proxies of different classes instead of corresponding samples to reduce sampling times. Teh et al. \shortcite{teh2020proxynca++} further improve the ProxyNCA by scaling the gradient of proxies. Zhu et al. \shortcite{zhu2020fewer} propose to sample the most informative negative proxies to improve the performance, while Kim et al. \shortcite{kim2020proxy} set the proxies as anchors instead of the samples to learn the inter-class structure. Yang et al. \shortcite{yang2022hierarchical} develop hierarchical-based proxy loss to boost learning efficiency. Roth et al. \shortcite{roth2022non} regular the distribution of samples around the proxies following a non-isotropic distribution. In contrast to these methods that compare the single sample-proxy pair, our method further refines the manifold structure by aligning the whole distributions between proxies and image samples via a novel adversarial domain adaptation framework.

%\subsection
\B{Domain Adaptation and Adversarial Learning.} Domain Adaptation initially aims to solve the lack of labeled data where the learned feature is domain-invariant so that classifiers can be easily shifted to the new data distribution. The basic idea is to match the feature distributions to decrease their \textit{domain shift} between the source and target datasets \cite{quinonero2008covariate,torralba2011unbiased}. One important branch of domain adaptation is Adversarial Learning \cite{goodfellow2020generative,hassanpour2022survey}, where two or more models take part in the min-max game to generate domain-invariant features. 

Ganin et al. \shortcite{ganin2015unsupervised} first generate domain invariant features with adversarial training on neural networks. Tzeng et al. \shortcite{tzeng2017adversarial} improve the discriminator that does not share the weight with the feature generator. Pei et al. \shortcite{pei2018multi} utilize multiple discriminators assigned for each class to improve performance. Saito et al. \shortcite{saito2018maximum} minimize the prediction discrepancy of two discriminators on the target domain, while Lee et al. \shortcite{lee2019sliced} improve the method to compare their sliced Wasserstein distance instead. The primary application of adversarial learning is to produce synthetic textual or image data \cite{kingma2013auto,radford2015unsupervised,isola2017image}. Ren et al. \shortcite{ren2018improved, ren2019improving} also applied this technique to enhance the quality of image captioning.

Recent studies have investigated the application of domain adaptation in image or textual retrieval tasks \cite{laradji2020m,pinheiro2018unsupervised}. Wang et al. \shortcite{wang2017adversarial} employ domain adaptation to align image and textural data using a single discriminator, whereas Ren et al. \shortcite{ren2021beyond} utilize multiple discriminators to get improved performance. In contrast to previous efforts, we propose aligning the distributions of data representations and proxies within the same image modality.

\section{Proposed Method}
\label{sec:method}

We propose a new framework to close the gap between the distributions of the data samples and the proxies for proxy-based DML losses that are already in place. We utilize the adversarial domain adaptation technique to transfer data samples and proxies to domain invariant feature space. To overcome the limitation of the number of proxies, we also conduct a novel strategy to augment data as a bridge between the samples and proxies, which demonstrates a smooth learning process.

\subsection{Preliminary}
\label{sec:preliminary}

%% Define DML
\B{Deep Metric Learning (DML)} Consider a set of data samples $\mathcal{S}=\{I_i, y_i\}_{i=1}^N$ with raw images $I_i$ and its corresponding class label $y_i \in \{1,\dots,C\}$; we learn a projection function $f_G:\mathcal{S} \stackrel{f}{\rightarrow} \mathcal{X}$, which project the input data samples to a hidden embedding space (or metric space) $\mathcal{X}$. We define the projected features set as $X = \{x_i\in \mathbb{R}^d\}_{i=1}^N$. The primal goal of Deep Metric Learning (DML) is to refine the projector function $f_G(\cdot)$, which is usually constructed with \textit{convolutional deep neural networks} (CNN) as the backbone, to generate the projected features that can be easily measured with defined distance metric $d(x_i, x_j)$ based on the semantic similarity between sample $I_i$ and $I_j$. Here we adopt the distance metric $d(\cdot)$ as the \I{cosine similarity}. Before delivering features to any loss, we use L2 normalization to eliminate the effect of differing magnitudes.

%% Define proxy-based DML
\B{Proxy-based DML} To boost the learning efficiency, a group of DML methods pre-define a set of learnable representations $P=\{p_i\in \mathbb{R}^d\}_{i=1}^C$, named \textit{proxy}, to represent subsets or categories of data samples. Typically there is one proxy for each class so that the number of proxies is the same as the number of classes $C$. The proxies are also optimized with other network parameters. The first proxy-based method, Proxy-NCA \cite{movshovitz2017no}, or its improved version Proxy-NCA++ \cite{teh2020proxynca++}, utilizes the Neighborhood Component Analysis (NCA) \cite{goldberger2004neighbourhood} loss to conduct this optimization. The later loss Proxy-Anchor (PA) \cite{kim2020proxy} inversely sets the proxy as the anchor and measures all proxies for each minibatch of samples. The PA loss $\mathcal{L}_{proxy}(X, P)$ can be presented as

\begin{equation}
	\label{loss_pa}
	\scalebox{0.85}{
	$\begin{aligned}
		\mathcal{L}_{proxy}(X, P) &=\frac{1}{\left|P^{+}\right|} \sum_{p \in P^{+}} \log \left(1+\sum_{x\in 
			X_{p}^{+}} e^{-\tau d(x, p)+\delta}\right) \\ &+\frac{1}{|P|} \sum_{p \in P} \log \left(1+\sum_{x \in X_{p}^{-}} e^{\tau d(x, p)+\delta} \right)		
	\end{aligned}$
	}
\end{equation}
where $X_{p}^{+}$ denotes the set of positive samples for a proxy $p$; $X_{p}^{-}$ is its complement set; $\tau$ is the scale factor; and $\delta$ is the margin. Since the PA updates all proxies for each mini-batch, the model has higher learning efficiency in capturing the structure of samples beyond the mini-batches. We propose these two fundamental proxy-based losses (PNCA++ and PA) that achieve competitive results as our baselines.

%-------------------------------------------------------------------------
\subsection{Domain Data Augmentation}
\label{sec:domain_mixup}
To reduce the distribution gap between the data samples and the proxies, we transform the proxy-based DML into a domain adaptation problem. We regard the data samples as data points in the source domain, while the initialed proxies are data points in the target domain. We noticed that the number of proxies in the target domain is especially limited compared to the data samples because the basic proxy method only assigns a single proxy for each class. The unbalanced samples and proxies would cause learning biases in modeling the distributions. Also, the proxies are initialized from a normal distribution and do not contain any related semantic information, which also causes difficulty in aligning their distribution to the data sample domain.

To overcome these difficulties, we propose a novel data augmentation strategy to create an intermediate domain to balance the amount of data points for domain adaptation. Specifically, we interpolate the space with mixed features from data set $X$ and proxy set $P$. For each data sample $x_i$ and its corresponding proxy $p_i$, we create a data feature $\hat{d}$:
\begin{equation}
	\hat{d}_i = \{\lambda x_i + (1 - \lambda)p_i \},
\end{equation}
% , and $norm$ represents the L2 normalization
where $\lambda \sim Beta(\alpha, \beta)$ is the linear interpolation coefficient that sampled from \I{beta} distribution with $\alpha > 0$ and $\beta > 0$ that decide its probability density function. The new data contains semantic information between the data sample and proxies and shares their distribution statistics. Therefore pushing $\hat{d}$ is equal to pushing both samples $x$ and proxies $p$, and their distribution is closer to the data sample domain than the original proxies.

In addition, we further propose extending the number of training instances by augmenting the data-proxy pairs within the same class. For each pair of samples $(x_i, x_j)$ and their corresponding augmented data $(\hat{d}_i, \hat{d}_j)$ inside the mini-batch, we propose the following mixing:

\begin{equation}
	\begin{aligned}
		\tilde{x}_i = \{ \mu_1 x_i + (1 - \mu_1)x_j\} \\
		\tilde{d}_i = \{ \mu_2 \hat{d}_i + (1 - \mu_2)\hat{d}_j \},
	\end{aligned}
\end{equation}
where $\mu_1, \mu_2$ are also sampled from $Beta$ distribution. Then we mix the new samples $\tilde{x}_i$ and $\tilde{d}_i$ inside the mini-batch to ensure the number of data samples with the same label $n \geq 2$. Combined with the original mini-batch, the augmented data sample set and the augmented proxy set are noted as $\tilde{X} = X \cup \{\tilde{x}_1,\tilde{x}_2\cdots\}$ and $\tilde{D} = \{\hat{d}_1, \hat{d}_2 \cdots\} \cup \{\tilde{d}_1,\tilde{d}_2 \cdots\}$, and the size of mini-batch is also extended accordingly. We then normalize the composed features in $\tilde{X}$ and $\tilde{D}$ with L2 normalization to constrain them on a unit hypersphere embedding space where the magnitude is fixed to 1.

\subsection{Domain-level Discriminator}
\label{sec:domain_level}
Based on the augmented data, our goal is to refine the set $\tilde{X}$, $\tilde{D}$ and the original proxy set $P$ to \textit{domain invariant} representations that share the same distribution to help the proxy-based losses. We follow the principle idea of adversarial domain adaptation \cite{ganin2016domain} to estimate the domain divergence by learning a domain-level discriminator. Specifically, we learn a classifier $f_D(\cdot)$ that minimizes the risk of \textit{domain prediction} (to predict if the data comes from a unique domain) between the set $\tilde{X}$, $\tilde{D}$ and $P$.

Generally, we would label the data from a specific domain with the one-hot label as the prediction target. Since we have three different domains including the augmented data domain and our labeling space is symmetric, we would simply assume the features $\tilde{x} \in \tilde{X}$ are labeled as $y_0 = \overline{001}$ while $\tilde{d} \in \tilde{D}$ are labeled as $y_1 = \overline{010}$, and the initial proxies $P$ are labeled as $y_2 = \overline{100}$ for convenience. Specifically, we estimate the domain classifier $f_D(\cdot)$ as an MLP with a single hidden layer and a ReLU function. The hidden layer is then projected to a 3-dimensional head as the logits prediction of the domains. To optimize the $f_D(\cdot)$ with a low prediction risk on the labeling space, we conduct the cross-entropy objective $\mathcal{L}_{adv}$ as follows

\begin{equation}
\scalebox{0.90}{
	$\begin{aligned} 
		&\mathcal{L}_{adv}(\tilde{X}, \tilde{D}, P) = \sum_i^{\tilde{N}} \mathcal{L}_{ce}(f_D(\tilde{x}_i), y_0) 			\\
		& + \sum_i^{\tilde{N}} \mathcal{L}_{ce}(f_D(\tilde{d}_i), y_1) + \sum_i^{C} \mathcal{L}_{ce}(f_D(p_i), y_2)
	\end{aligned}$
}
\label{eq:adv}
\end{equation}

where $\mathcal{L}_{ce}$ is the cross entropy loss and $\tilde{N}$ is the total number of samples after the data augmentation. The parameters of the classifier $f_D(\cdot)$ are optimized to minimize the adversarial loss $\mathcal{L}_{adv}$ in training. Recall that the feature $x$ is generated from the projection function $f_G(\cdot)$. Thus, the parameters of generator $f_G(\cdot)$ are optimized to fool the discriminator $f_D(\cdot)$ in the opposite direction. Since $\tilde{D}$ in the target domain contains features that mixed from $X$ and proxies $P$, optimizing the $\tilde{D}$ equals optimizing the generator $f_G(\cdot)$ in the source domain while updating the original proxies $P$. Thus, the adversarial learning signal of $\mathcal{L}_{adv}$ would help both generator $f_G(\cdot)$ and the original proxies $P$ to maintain the domain invariant representations to fool the classifier.

\subsection{Category-level Discriminator}
\label{sec:class_level}

One drawback of the domain-level discriminator described above is that the discriminate information, especially the inter-class correlation, is ignored in the optimization process. Losing the discriminative information will cause all data points to be concentrated on a local area or a surface, which would cause inter-class ambiguity and confuse the metric learning losses. To solve this problem, we further propose a category-level discriminator that learns to predict the class of data samples and compare the discrepancy of predictions between the data samples and mixture proxies.

Specifically, we optimize a classifier $f_C(\cdot)$ with the feature generator $f_G(\cdot)$ to predict the category label $Y=\{y_0,y_1,\dots\}$ from mixture data samples $\tilde{X}$ with the classification loss $\mathcal{L}_{cls}(\tilde{X}, Y)$ as
\begin{equation}
	\mathcal{L}_{cls}(\tilde{X}, Y) = \frac{1}{\tilde{N}}\sum_i^{\tilde{N}} \mathcal{L}_{ce}(f_C(\tilde{x_i}), y_i).
	\label{eq:cls}
\end{equation}
The cross-entropy loss $\mathcal{L}_{ce}$ would provide a supervised learning signal to $f_G(\cdot)$ to maintain the discriminative information during the DML training process.

We note that the data samples that share the distributions would also share the labeling space with the target proxy domain. To further align the distributions, we propose to constrain the samples from the source domain and augmented data from the target domain to have a low discrepancy of predictions from our category classifier $f_C(\cdot)$. Thus, one additional goal of $f_C(\cdot)$ is to learn the maximized discrepancy of the category prediction between the data samples $\tilde{X}$ and mixture proxies $\tilde{D}$ while the $\tilde{D}$ are later optimized to minimize this discrepancy.

To measure the discrepancy of the category probabilities, we empirically adopt the discrepancy introduced in \cite{chen2022reusing} that utilizes the \I{Nuclear-norm Wasserstein Distance (NWD)}. The NWD is demonstrated to be the upper bound of the \textit{Frobenius-norm}, which estimates the correlations of the predictions \cite{cui2020towards}. Thus, we compare the NWD between the logistic predictions of $f_C(\cdot)$ from the augmented samples $\tilde{X}$ and data $\tilde{D}$. The loss $\mathcal{L}_d (\tilde{X}, \tilde{D})$, which measures the NWD can be described as,
\begin{equation}
	\mathcal{L}_d (\tilde{X}, \tilde{D}) = \frac{1}{\tilde{N}}(\sum_i^{\tilde{N}} ||f_C(\tilde{X})||_{\star} - \sum_i^N ||f_C(\tilde{D})||_{\star}),
	\label{eq:discrepancy}
\end{equation}
where $||x||_{\star} = \sum \sigma(x)$ denotes the \textit{nuclear-norm} of $x$, which is defined as the sum of its singular values.

\subsection{The Combined Loss and Training Progress}
\label{sec:theory}

We adopt the paradigm of adversarial learning to alternatively update the gradient of our feature generator $f_G(\cdot)$ and the discriminators $f_D(\cdot)$ and $f_C(\cdot)$ discussed above. To this end, we train our combined loss by playing the \I{min-max game} as follows,
\vspace{2pt}
\begin{equation}
	\min_{f_G, f_C} \{ \eta (\mathcal{L}_{cls} + \max_{f_C} \mathcal{L}_d)\} + (1-\eta) \min_{f_D} \max_{f_G}\mathcal{L}_{adv},
\end{equation}
\vspace{2pt}
where $\eta$ is the pre-defined hyperparameter that balances the contribution between the domain-level and category-level discriminators. Empirically, we do not set another weight between classification loss $\mathcal{L}_{cls}$ and discrepancy loss $\mathcal{L}_d$. We also need the original proxy-based loss $\mathcal{L}_{proxy}$ in Eq. \ref{loss_pa} to do the basic DML of the sample-proxy pair in training. Note that the augmented data set $\tilde{D}$ is only for domain adaptation progress; the original $\mathcal{L}_{proxy}$ only operates $\tilde{X}$ and original proxies $P$. Thus, our combined training progress can be described as the following two sub-processes:

\begin{flushleft}
\scalebox{0.85}{
\begin{minipage}{1.15\linewidth}
\begin{align}
&(\theta_{f_D}, \theta_{f_C}) = \arg\min_{f_D, f_C} \{ \eta (\mathcal{L}_{cls} - \mathcal{L}_d) +  (1-\eta) \mathcal{L}_{adv} \}, \label{eq:update_1} \\
&(\theta_{f_G}, P) = \arg\min_{f_G, P, \tilde{D}} \{ \eta (\mathcal{L}_{cls} + \mathcal{L}_d) - (1-\eta) \mathcal{L}_{adv} + \gamma \mathcal{L}_{proxy} \}, \label{eq:update_2}
\end{align}
\end{minipage}
}
\end{flushleft}

where parameters $\theta_{f_D}$ and $\theta_{f_C}$ are updated in first phase and $\theta_{f_G}$ and the proxies $P$ are updated with $\tilde{D}$ in the second phase. Even if gradient reversal layers are accepted for achieving adversarial training in earlier domain adaptation works, we empirically conclude that a separate training phase would be more feasible for us in our search for stable training parameters. The full training progress can be referred to in Algorithm \ref{training}.

% ============================================== SOTA table1 ===========================================================================

\begin{table*}[h]
	\small
	\centering
	\begin{adjustbox}{max width=0.90\textwidth}
		\begin{tabular}{|c|c|c|c|c|c|c|c|c|c|c|c|} \hline 
			\multirow{ 2}{*}{Method} & \multirow{ 2}{*}{Reference} &Settings & \multicolumn{3}{|c|}{CUB-200} & 	\multicolumn{3}{|c|}{CARS-196}  & \multicolumn{3}{|c|}{SOP}  \\ 
			\cline{3-12}
			
			&				&Arch/Dim& R@1  & R@2 	& R@4 			& R@1   & R@2    & R@4  	&  R@1	& R@10 &  R@100  		\\  \hline
			
			PNCA \cite{movshovitz2017no}	&\s{CVPR17'}	&BN/512	&49.2	&61.9	&67.9				&73.2	&82.4	 &86.4				&73.7	&--		&--		\\ %\hline
			ProxyGML$\dagger$\cite{zhu2020fewer}&\s{NeurIPS20'}&BN/512&66.6	&77.6	&86.4	 			&85.5	&91.8	 &95.3			&78.0	&90.6	&96.2		\\ %\hline
			DiVA\cite{milbich2020diva}&\s{ECCV20'}			&R50/512&69.2	&79.3	&--					&87.6	&92.9	 &--	  						&79.6	&91.2	&--		\\ %\hline			
			S2SD\cite{roth2021simultaneous}&\s{ICML21'}		&R50/512&70.1	&79.7				&71.6		&89.5	&93.9			&72.9	&80.0	&91.4	&--			\\	%\hline
			DCML-Proxy$\dagger$\cite{zheng2021dcml}&\s{CVPR21'}		&R50/512			&65.2	&76.4	&84.8						&81.2	 &89.8	&94.6		&--		&--		&--		\\ %\hline
			DCML-MDW\cite{zheng2021dcml}&\s{CVPR21'}		&R50/512		&68.4	&77.9	& 86.1		&85.2	&91.8	 &96.0		&79.8	&90.8	&95.8		\\ %\hline
			DRML\cite{zheng2021drml}&\s{ICCV21'}			&BN/512	&68.7	&78.6	& 86.3						&86.9	&92.1	 &95.2		&71.5	&85.2	&93.0		\\ %\hline			
			
			PA+AVSL$\dagger$\cite{zhang2022attributable} 	&\s{CVPR22'}	&R50/512	&71.9 & 81.7 & 88.1 	&91.5&95.0&97.0	&79.6	&91.4	&96.4		\\ %\hline
			PA+NIR$\dagger$\cite{roth2022non} &\s{CVPR22'}	&R50/512		&69.1	& 79.6 	& -- 		&87.7 	&92.5 	& --			&80.7	&91.5	&--		\\ %\hline	
			HIST\cite{lim2022hypergraph} &\s{CVPR22'}		&R50/512&71.4 	& 81.1	& 88.1 		&89.6   &93.9 	 & 96.4 		&\B{81.4}	&92.0	&\B{96.7} \\ %\hline
			%PLG\cite{roth2022integrating}&\s{CVPR22'}		&R50/512&71.4	&81.1	&--					&90.2	&94.4	 &--			&81.3}&\B{92.3}	&--			\\ %\hline		
			%PLG\cite{roth2022integrating}&\s{CVPR22'}		&BN/512	&70.4	&80.5	&--				&88.1	&92.9	 &--			&79.4	&91.2	&--			\\ %\hline	
			DAS\cite{liu2022densely} &\s{ECCV22'}			&R50/512&69.2 	& 79.3 	& 87.0 			&87.8 	&93.2   & 96.0		&80.6	&91.8	&\B{96.7}	\\ %\hline
			MS+CRT\cite{kan2022coded} &\s{NeurIPS22'}		&R50/512	&64.2 	& 75.5 	& 84.1 	&83.3 	&89.8    & 93.9 	&79.0	&91.1	&96.5		\\ %\hline
			\hline \hline
			
			$\bt$PNCA++\cite{teh2020proxynca++}&\s{ECCV20'}	&R50/512		&69.0	&79.8	&87.3				&86.5	 &92.5	&95.7			&80.7	&92.0	&\B{96.7}	\\ \hline		
			%\rowcolor{Gray}
			PNCA+\B{DADA}(R50) & Ours 						&R50/512&71.4	&81.1	&87.6			&90.5	 &93.4	&96.8			&81.2	&91.8	&96.5			\\ \hline
			
			$\bt$PA\cite{kim2020proxy}&\s{CVPR20'}	&BN/512&68.4	&79.2	&86.8		&86.1	 &91.7	&95.0			&79.1	 &90.8	 &96.2	\\ \hline

			PA+\B{DADA}(BN)	 & Ours							&BN/512&69.8	&80.4	&87.1			&89.4	 &92.1	 &96.2		&79.6	 &91.0	 &96.3	\\ \hline
			
			$\bt$PA (R50) $\ast$ \cite{kim2020proxy}&\s{CVPR20'}		&R50/512&69.7	&80.0	&87.0		&87.7	 &92.9	 &95.8		 &80.0	&91.7	 &96.6	\\ \hline
			%\rowcolor{Gray}
			PA+\B{DADA}(R50) & Ours 					&R50/512&\B{72.9}&\B{81.9}&\B{88.3}		&\B{92.1}&\B{95.2}&\B{97.1}	&{81.0}&\B{92.1}& {96.2}		\\ \hline
		\end{tabular}
	\end{adjustbox}
	\caption{Comparison with the state-of-the-art litterateurs on CUB200-2011 \cite{wah2011caltech}, CARS196 \cite{krause20133d}, Stanford Online Products (SOP) \cite{oh2016deep}. The works are sorted by their published date. The second column shows the same architecture of the backbone and feature dimension we selected to compare with our proposed method. \B{R50} represents the ResNet50 and \B{BN} for InceptionBN and \B{GN} for GoogleNet backbones. $\dagger$ denotes the methods based on proxy-based DML, and $\bt$ labels the works on which our method is based. We adopt the experimental results of \B{PA(R50)} from the third papers \cite{lim2022hypergraph}. The \B{Bold} represents the best score.}
	\label{table:sota_1}
\end{table*}

% =================================================================================================================================================

	% =================================================  algorithm ==================================================
	
	\begin{algorithm}
	\small
		\caption{Data-Augmented Domain Adaptation (DADA) for Proxy-based Deep Metric Learning}
		\label{training}
		\resizebox{0.95\linewidth}{!}{%
		\begin{minipage}{\linewidth}
			\begin{algorithmic}[1]
				\State \B{Input: } Training Set $\mathcal{S}=\{I_i, y_i\}_{i=1}^N$
				
				\State \B{Initialization:} $\theta_{f_G}$, $\theta_{f_C}$, $\theta_{f_D}$, and proxies $P$
				\While {\I{stop criteria is not satisfied}}
				% ===========================================================================
				
				\State Obtain a batch $\{I_i, y_i\}_{i=1}^n$ from $S$
				\State Select proxies $P = \{p_i\}_{i=1}^{n}$ according the labels $Y$
				\State Embedding features $X=\{x_i\}_{i=1}^{n}\gets f_G(I)$
				
				\State /* \I{Prepare the mixture intermediate data domain} */
				\State Sample $\lambda \sim Beta(\alpha, \beta)$
				\State Sample $\mu_1, \mu_2 \sim Beta(1.0, 1.0)$
				\State Compose $\hat{P} \gets \{\lambda X + (1 - \lambda)P\}$ 
				\State Compose $\tilde{X} \gets X \cup \{\mu_1 x_i + (1 - \mu_1) x_j \}$
				\State Compose $\tilde{P} \gets \hat{P} \cup \{\mu_2 \hat{p}_i + (1 - \mu_2) \hat{p}_j \}$
				\State L2 Normalize $\tilde{X}$, $\tilde{P}$ and $P$
				% ===========================================================================
				
				\State /* \I{Discriminator Training Phase} begin */
				\For {$k$ steps}
				\State Cal $ \Delta\theta_{f_D},\Delta\theta_{f_C} \gets \eta\frac{\partial(\mathcal{L}_{cls}(\tilde{X}, Y) - \mathcal{L}_d(\tilde{X}, \tilde{P}))}{\Delta\theta_{f_D},\Delta\theta_{f_C}}$
				\State Cal $\Delta\theta_{f_D},\Delta\theta_{f_C} \gets (1 - \eta)
				\frac{\partial \mathcal{L}_{adv}(\tilde{X}, \tilde{P}, P)}{\Delta\theta_{f_D},\Delta\theta_{f_C}}$
				\State Update $\theta_{f_D}, \theta_{f_C} \gets Adam \{\Delta\theta_{f_D}, \Delta\theta_{f_C} \}$
				\EndFor
				% ==================================================================================
				
				\State/* \I{Generator Training Phase} begin */
				
				\State Cal $\Delta\theta_{f_G},\Delta P \gets \eta \frac{\partial(\mathcal{L}_{cls}(\tilde{X}, Y)
					+ \mathcal{L}_d(\tilde{X}, \tilde{P}))}{\Delta\theta_{f_G}, \Delta P}$
				\State Cal $\Delta\theta_{f_G},\Delta P \gets -(1 - \eta)\frac{\partial \mathcal{L}_{adv}(\tilde{X}, \tilde{P}, P)}
				{\Delta\theta_{f_G},\Delta P}$
				\State Cal $\Delta\theta_{f_G},\Delta P \gets \gamma \frac{\partial \mathcal{L}_{proxy}(\tilde{X}, P)}{\Delta\theta_{f_G},\Delta P}$
				
				\State Update $\theta_{f_G}, P \gets Adam \{\Delta\theta_{f_G},\Delta P \}$
				% ======================================
				\EndWhile
			\end{algorithmic} 
			\end{minipage}
			}
\end{algorithm}

%%%%%%%%%%%%%

%------------------------------------------------------------------------
\section{Experiments}
\label{sec:experiments}
We present our performance study and discuss the experimental results in this section.  
\subsection{Datasets and Metrics}

We use the standard benchmarks CUB-200-2011 (\B{CUB200}) \cite{wah2011caltech} with 11,788 bird images and 200 classes, and \B{CARS196} \cite{krause20133d} that contains 16,185 car images and 196 classes. We also evaluate our method on larger Stanford Online Products (\B{SOP}) \cite{oh2016deep} benchmark that includes 120,053 images with 22,634 product classes, and In-shop Clothes Retrieval (\B{In-Shop}) \cite{liu2016deepfashion} dataset with 25,882 images and 7982 classes. We follow the data split that is consistent with the standard settings of existing DML works \cite{teh2020proxynca++,kim2020proxy,mix2021,zheng2021drml,roth2022non,lim2022hypergraph,zhang2022attributable}. We adopt the \B{Recall@K} (K=1,2,4 in CUB200 and CARS196, K=1,10,100 in SOP, and K=1,10,20,30 in In-Shop) proposed in existing works to evaluate the accuracy of ranking. We also evaluate it with Mean Average Precision at R (\B{MAP@R}) which is based on the idea of MAP and R-precision, which is a more informative DML metric \cite{musgrave2020metric}.

\subsection{Implementation Details}

We train our model in a machine that contains a single RTX3090 GPU with 24GB memory. The Implementation is based on the existing RDML \cite{roth2020revisiting}

\B{Backbones and Preprocessing.} In this paper, we propose two basic backbones to evaluate our learning algorithm: the ResNet50\cite{he2016deep} and the InceptionBN \cite{ioffe2015batch}. They are pre-trained on ImageNet1K\cite{deng2009imagenet} and are widely used in DML works for performance evaluation, where we resize the image to $224 \times 224$, do random resized cropping, and random horizontal flipping. In the test phase, the images are first resized to $256 \times 256$, then cropped back to $224 \times 224$. A linear head embeds the feature from the second last layer of the backbones to a 512-dimension hidden space. We follow the standard pre-processing introduced in other deep metric learning works \cite{mix2021,zheng2021drml,roth2022non,lim2022hypergraph,zhang2022attributable}. We also adopt global max and average pooling with layer normalization on CNN backbones suggested by Teh et al. \cite{teh2020proxynca++} to further improve the generalization of features.

% ====================================================== SOTA table Inshop =====================================================================

\begin{table}
	\small
	\begin{adjustbox}{max width=0.45\textwidth}
		\begin{tabular}{|c|c|c|c|c|c|} \hline 
			\multicolumn{6}{|c|}{In-Shop Clothes Retrieval (In-Shop)} \\ \hline
			Methods					 & Arch/Dim	&R@1		&R@10		&R@20  		&R@30 	\\ \hline
			MS \shortcite{wang2019multi}		&BN/512		&89.7		&97.9		&98.5		&98.8 	\\ \hline
			SHM \shortcite{suh2019stochastic}&BN/512		&90.7		&97.8		&98.5		&98.8 	\\ \hline		
			SCT \shortcite{xuan2020hard}		&R50/512	&90.0		&97.5		&98.1		&--		\\ \hline
			XBM \shortcite{wang2020cross}	&BN/512		&89.9		&97.6		&98.4		&98.6	\\ \hline
			IBC	\shortcite{ibc2021}			&R50/512	&92.8	&\B{98.5}	&\B{99.1}	&\B{99.2}	\\	\hline
			PA$\dagger$ \shortcite{kim2020proxy}		&BN/512		&90.4		&98.1		&98.8	&99.0	\\ \hline
			PA+Mix$\dagger$ \shortcite{mix2021}		&R50/512	&91.9		&98.2		&98.8	&--		\\ \hline
			\hline
			PNCA++$\dagger$\shortcite{teh2020proxynca++}&R50/512	&90.4		&98.1		&98.8		&99.0\\	\hline 
			PNCA + \B{DADA} (ours)		&R50/512	&91.7		&98.2		&98.6		&98.8		\\ \hline
			PA + \B{DADA} (ours)		&R50/512	&\B{93.0}	&\B{98.5}	& 98.9 & 99.1 \\ \hline
		\end{tabular}
	\end{adjustbox}
	\caption{Compare with the existing state-of-the-art DML works on the In-Shop \cite{liu2016deepfashion} dataset. The \B{Bold} represents the best score.}
	\label{table:sota_Inshop}
\end{table}

% =================================================================================================================================================

% ============================================== Ablation study table ===============================================================================

\begin{table}
	\small
	\centering
	\begin{adjustbox}{max width=0.48\textwidth}
		\begin{tabular}{|c|c|c|c|c|} \hline 
			& \multicolumn{2}{|c|}{ProxyAnchor}  & \multicolumn{2}{|c|}{ProxyNCA++} 	\\ \hline
			Settings							&R@1					& MAP@R 		&R@1	&MAP@R  		\\ \hline
			
			\lc{Baseline}																&69.1											& 26.5 								& 68.4						&	25.8							\\ \hline
			%\lc{+Mix}																&	67.0 \red{(-2.1)}					&	25.3 \red{(-1.2)}			& 66.7 \red{(-1.7)}	& 24.7 \red{(-1.1)}		\\ \hline
			\lc{+Aug}																	& 69.3 (+0.2)			& 26.5 (+0.0)				& 68.5 (+0.1)				& 25.9 (+0.1)	\\ \hline
			\lc{$+\mathcal{L}_{adv}$}									&70.2 (+1.1)		 		& 27.3 (+0.8)				& 69.2 (+0.8) 				& 26.4 (+0.6)	\\ \hline
			\lc{ $+\mathcal{L}_{adv}+$Aug}						&70.9 (+1.8)		 		& 27.8 (+1.3)				& 69.8 (+1.4) 				& 26.8 (+1.0)	\\ \hline
			\lc{$+\mathcal{L}_{cls}$}										&69.3 (+0.2)		 		&	27.0 (+0.5)				&	68.9 (+0.5)				& 26.2 (+0.4)	\\ \hline
			\lc{$+\mathcal{L}_{cls}+\mathcal{L}_d$}		&69.9 (+0.8)	 			& 27.4 (+0.9)				& 69.5 (+1.1)				& 26.6 (+0.8)	 \\ \hline
			
			\lc{$+\mathcal{L}_{cls}+\mathcal{L}_d+$Aug}		&70.4 (+1.3)		& 27.8 (+1.3)		& 69.4 (+1.0)						&	26.7 (+0.9)	\\ \hline
			\lc{$+\mathcal{L}_{adv}+\mathcal{L}_{cls}$}			& 71.4 (+2.3)	 & 28.2 (+1.7)		&	 69.3 (+0.9)						&	27.1 (+1.3)	\\ \hline
			
			\lc{$+\mathcal{L}_{adv}+\mathcal{L}_{cls}+\mathcal{L}_1$} &71.6 (+2.5)	& 28.2 (+1.7)	& 69.4 (+1.0)	& 27.0 (+1.2)	\\ \hline
			
			\lc{$+\mathcal{L}_{adv}+\mathcal{L}_{cls}+\mathcal{L}_d$} &72.0 (+2.9)	& 28.9 (+2.4)	& 69.9 (+1.5)	& 27.7 (+1.9)	\\ \hline
			\makecell {$+\mathcal{L}_{adv}+\mathcal{L}_{cls}+\mathcal{L}_d $ \\
				+ Aug (\B{ours})}	&\B{72.9 (+3.8)}  &	\B{29.9(+3.4)}	& \B{70.2(+1.8)}		& \B{28.0 (+2.2)}	\\ \hline
		\end{tabular}
	\end{adjustbox}
	\caption{Study the contribution of each component of our method and loss function on CUB200. We reproduce the result of ProxyAnchor, which has a batch size of 90, and ProxyNCA++, which has a batch size of 32, as the baseline of our method. \B{Aug} represents the alignment of the augmented data and samples we introduced in Sec \ref{sec:domain_mixup}. We denote the difference in \I{percentage point} ($pp$) compared with our baseline in the bracket.}
	\label{table:ablation}
\end{table}

% ====================================================================================================================================================

\B{Training Details.} Our optimization is done using Adam ($\beta_1=0.5, \beta_2=0.999$) \cite{kingma2014adam} with a decay of $1\cdot10^{-3}$. We set the learning rate at $1.2\cdot 10^{-4}$ for the feature generator $f_G(\cdot)$ and $5\cdot10^{-4}$ for our discriminators. We adopt the learning rate $4\cdot10^{-2}$ for the proxies as suggested in \cite{roth2022non}. For most of the experiments, we fixed the batch size to 90 as a default setting, which is consistent with \cite{kim2020proxy}. Empirically we apply \I{batch normalization} on the domain-level discriminator to reduce its correlation variance within the batch. For all experiments, the first layer of $f _C(\cdot)$ is set to 512. For the second layer, we assigned 128 dimensions to the CUB200 and CARS196 datasets, 8192 dimensions to the SOP datasets, and 4096 dimensions to the In-Shop datasets. We set $\{\eta = 0.005, \gamma= 0.0075\}$ for CUB200, and $\{\eta = 0.01, \gamma=0.0075\}$ for CARS196. We select $\{\eta = 0.01, \gamma= 0.005\}$ for both SOP and In-Shop datasets.

\subsection{Qualitative Results}
\B{Comparing with Proxy Baselines.} We compare the performance of our approach with the existing proxy-based metric learning methods and the recent \I{state-of-the-art} metric learning methods on the popular benchmarks introduced above (refer to Table \ref{table:sota_1}). We observe that our DADA frameworks can significantly improve the performance of the original proxy-based DML methods (marked with $\vartriangleright$) by a large margin. Specifically, comparing with the original PA method on ResNet50, our proposed PA+DADA outperforms $3.2pp$ ($4.6\%$) on the recall@1 of CUB200 and $4.4pp$ ($5.0\%$) on the recall@1 of CARS196. On the larger datasets (SOP and In-Shop), our method is also better than the original PA and PNCA++.

\B{Comparing with state-of-the-art.} We further compare the performance of our method with the state-of-the-art methods based on the CNN backbones as listed in Table \ref{table:sota_1} and \ref{table:sota_Inshop}. For the CARS196 dataset, our method reaches $92.1$ on Recall@1, which has a $0.9pp$ improvement over the previous state-of-the-art AVSL \cite{zhang2022attributable} on the ResNet50 backbone. For CUB200, our method outperforms the previous state-of-the-art AVSL $0.6pp$ on Recall@1, $0.2pp$ on Recall@2, and $0.1pp$ on Recall@4. We observe that our performance on SOP and In-Shop is limited but very close to the previous state-of-the-art IBC \cite{ibc2021}, CRT \cite{kan2022coded}, and HIST \cite{lim2022hypergraph} on a few metrics. The lesser improvement in the high-value recall of these two datasets is mainly due to the large number of classes (11318 and 3997) and the limited number of samples in each class (less than 10).  This causes some difficulty for our category-level discriminator to learn the discriminative information. Nevertheless, our method still achieves good performance comparable to those of the state-of-the-art methods in all metrics and outperforms other proxy-related methods on these two datasets. We will investigate techniques to overcome this limitation in our future works.

%-------------------------------------------------------------------------

\subsection{Ablation Study}

\B{Contributions of the Objective Components.} We analyze the ablation study to evaluate the contribution of each objective component of our proposed framework based on both ProxyAnchor \cite{kim2020proxy} and ProxyNCA++ \cite{teh2020proxynca++} on the CUB200 as listed in Table \ref{table:ablation}. We first notice that the data augmentation strategy (Aug) does not improve our baseline significantly in the absence of $\mathcal{L}_{adv}$ and $\mathcal{L}_{cls}$. This is because, without those regularization losses, Aug simply boosts some redundant positive samples and the mixed features do not take part in training. We conclude that the domain-level discriminator with $\mathcal{L}_{adv}$ has higher efficiency when the category-level discriminator with $\mathcal{L}_{cls}$ helps regularize the space and avoid the inter-class ambiguity. It increases the improvement to $+2.3pp$ on R@1 and $+1.7pp$ on MAP@R from $+1.1pp$ on R@1 and $+0.8pp$ on MAP@R in comparison with the single $\mathcal{L}_{adv}$ setting. We also demonstrate that the efficiency of the category-level classifier ($+\mathcal{L}_{cls}$) can be further improved by comparing the discrepancy of class prediction ($+\mathcal{L}_d$) between the source data and target proxies in adversarial learning. Comparing the general discrepancy L1 distance ($+\mathcal{L}_1$), the proposed NWD also shows increasing performances on both R@1 and MAP@R. A similar conclusion can also be driven by the results based on ProxyNCA. Therefore, we conclude that the combination of the domain and the category-level discriminator is more suitable for proxy-based DML than the settings with any single discriminator. We also study the impact of our hyperparameters and the combination of data groups that apply domain adaptation in the Appendix.

%-------------------------------------------------------------------------

%-------------------------------------------------------------------------
\section{Conclusion}
\label{sec:conclusion}
In this paper, we present an adversarial domain adaptation method with data augmentation to optimize the hidden space of the data and the proxies. We overcome the initial distribution gap between them to boost the learning efficiency of deep metric learning. We propose to align the domains of the data and the initial proxies by optimizing two classifiers at different levels, and training the embedding function and the proxies against them. To enhance the density of the manifold, we propose a strategy to conduct a mixture space by mixing the features from both domains. Our experimental results based on four popular deep metric learning benchmarks demonstrate that our learning method and mixed space efficiently boost the learning efficiency of existing proxy-based methods. While our framework focuses on solving the challenge of proxy-based DML methods, we believe it can be easily extended to other related metric learning methods, and it can also benefit zero-shot and self-supervised learning works. These are interesting and challenging works for future study.

%%%%%%%%% REFERENCES
\newpage
{
\bibliography{aaai24}
}

%-------------------------------------------------------------------------

\newpage
\clearpage

\section{Appendix}

\appendix
\setcounter{secnumdepth}{1}

In the appendix, we first discuss the generalized bound of our domain adaptation framework in Section \ref{appendix:theory}. Then, we compare our method with other similar methods in Section \ref{appendix:related}. We discuss the possible extension of our method to other model backbones in Section \ref{appendix:extension}. We also discuss some related studies including the data domain and the discriminators in Section \ref{appendix:additional}. After that, we list detailed experimental settings in Section \ref{appendix:setting} and describe some existing limitations in Section \ref{appendix:limits}. We illustrate the overall architecture of our method in Figure \ref{fig:framework} and show some examples of image retrieval results.

\section{Theory Insight}
\label{appendix:theory}

	\subsection{Notation and Definitions} 
	
	We adopt the definition \I{domain} as the distribution $\mathcal{D}$ on the space $\mathcal{X}$ with the ground truth labeling function $y$. In our scenario, we have source domain $\{\mathcal{D}_S, y_S\}$, the intermediate domain $\{\mathcal{D}_M, y_M\}$ and the target domain $\{\mathcal{D}_T, y_T\}$. The \I{hypothesis} $h$ is a function that we search to demonstrate the labeling $y$. For our data sample $x \sim \mathcal{D}_S$ and its corresponding proxy $p \sim \mathcal{D}_T$, the \I{risk} of the hypothesis $h$ is defined as the error risk with the labeling function $y$:
	\begin{equation}		
		\epsilon_S (h, y) := \mathbb{E}_{x \thicksim \mathcal{D}_S} [|h(x) - y_x|]
	\end{equation}
	\begin{equation}		
		\epsilon_T (h, y') := \mathbb{E}_{p \thicksim \mathcal{D}_T} [|h(p) - y'_p|],
	\end{equation}
	where $y'$ is the hidden labeling function of the target domain. Thus, our goal is to find the bound of $\epsilon_T (h, y')$ in the target domain where the \I{domain adaptation} is summarized to minimize the target risk $\epsilon_T(h, y')$ in term of the source risk $\epsilon_S (h, y)$ and other terms that affect its upper bound.
	
	Given the set of hypothesis class $h\in\mathcal{H}$, Ben-David et al. \cite{ben2010theory} define the symmetric difference hypothesis space $\mathcal{H} \Delta \mathcal{H}$, and the domain divergence $d_{\mathcal{H}\Delta\mathcal{H}}(\mathcal{D}_S, \mathcal{D}_T)$ where $\mathcal{H} \Delta \mathcal{H} := \{h(x) \neq h'(x)| h, h' \in \mathcal{H} \}$. Thus $d_{\mathcal{H}\Delta\mathcal{H}}(\mathcal{D}_S, \mathcal{D}_T)$ can be defined as,
	\begin{align*}
		d_{\mathcal{H}\Delta\mathcal{H}}(\mathcal{D}_S, \mathcal{D}_T) &= 2\sup_{h, h' \in \mathcal{H}\Delta\mathcal{H}} 
		|P_{\mathcal{D}_S}(h \neq h') - P_{\mathcal{D}_T}(h \neq h') | \\
		& \geq 2|\epsilon_S(h, h') - \epsilon_T(h, h')|
	\end{align*}
	
	To estimate domain divergence within the space of finite samples $(\mathcal{U}_S, \mathcal{U}_T)$, the $d_{\mathcal{H}\Delta\mathcal{H}}(\mathcal{D}_S, \mathcal{D}_T)$ is further relaxed to
	\begin{equation}
		d_{\mathcal{H}\Delta\mathcal{H}}(\mathcal{D}_S, \mathcal{D}_T) \leq d_{\mathcal{H}\Delta\mathcal{H}}(\mathcal{U}_S, \mathcal{U}_T) 
		+ O(\sqrt{1/m})
	\end{equation}
	where $O(\sqrt{1/m})$ is a empirical function introduced in \cite{ben2010theory} with converge rate $\sqrt{1/m}$ and $m$ is the size of data samples $\mathcal{U}_S$ and $\mathcal{U}_T$.

	\subsection{The Generalization Bound} 
	
	With the definition of $d_{\mathcal{H}\Delta\mathcal{H}}$ introduced above, for every $h \in \mathcal{H}$ the bound for data space $\mathcal{U}_S$ and $\mathcal{U}_T$ can be described as,
	\begin{equation}
		\epsilon_T(h) \leq \epsilon_S(h) + \frac{1}{2}d_{\mathcal{H}\Delta\mathcal{H}}(\mathcal{U}_S, \mathcal{U}_T) + \lambda + O(\sqrt{1/m}),
		\label{bound}
	\end{equation}
	where $\lambda$ denotes the combined risk of optimized hypothesis $h^*$: $\lambda := \epsilon_S(h^*) + \epsilon_T(h^*)$. 
	
	Now we assume our source, target, and intermediate domain service three possible source-target adaptation pairs: $\{\mathcal{U}_S, \mathcal{U}_T\}, \{\mathcal{U}_S, \mathcal{U}_M\}$, and $\{\mathcal{U}_M, \mathcal{U}_T\}$. Note that $\mathcal{U}_T$ does not serve as the source domain since it initially does not contain any semantic information, and its labeling space is still hidden. We combine the single bound of each source-target pair, as a convex function, to a combined boundary. In other words, by combining the source domain $D_S$ and intermediate domain $D_M$ to a single domain, our generalization bound of risks on three pairs can be inferred as the following:
	\begin{theorem}
		Let $\mathcal{H}$ be a hypothesis space of VC-dimension $d$, and $\mathcal{U}_S, \mathcal{U}_M$ and  $\mathcal{U}_T$ are samples of size $m$ drawn from $\mathcal{D}_S, \mathcal{D}_M, \mathcal{D}_T$. Then $\forall h \in \mathcal{H}$,
		\begin{equation}
			\begin{aligned}
				\epsilon_T(h) &\leq \frac{1}{2}\epsilon_S(h) + \frac{1}{4}(d_{\mathcal{H}\Delta\mathcal{H}}(\mathcal{U}_S, \mathcal{U}_M, \mathcal{U}_T)) \\ 
				& + \tilde{\lambda} + \tilde{O}(\sqrt{1/m})
			\end{aligned}
		\end{equation}
		where $\tilde{\lambda}$ denotes the combined risk of optimal hypothesis $h^*$ that $\tilde{\lambda} := \epsilon_S(h^*) + \epsilon_M(h^*) + \epsilon_T(h^*)$, and $d_{\mathcal{H}\Delta\mathcal{H}}(\mathcal{U}_S, \mathcal{U}_M, \mathcal{U}_T):=d_{\mathcal{H}\Delta\mathcal{H}}(\mathcal{U}_S, \mathcal{U}_T) + d_{\mathcal{H}\Delta\mathcal{H}}(\mathcal{U}_M, \mathcal{U}_T) + d_{\mathcal{H}\Delta\mathcal{H}}(\mathcal{U}_S, \mathcal{U}_M)$
		\label{theorem}
	\end{theorem}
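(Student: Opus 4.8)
The plan is to reduce the three-domain statement to three applications of the two-domain bound in Eq.~\eqref{bound} and then to fuse them by a convex/averaging argument, exactly as the preceding paragraph suggests (``combine the single bound of each source-target pair, as a convex function''). Writing $d_{XY}$ for $d_{\mathcal{H}\Delta\mathcal{H}}(\mathcal{U}_X,\mathcal{U}_Y)$ and $\lambda_{XY}:=\epsilon_X(h^*)+\epsilon_Y(h^*)$, I would first instantiate Eq.~\eqref{bound} on each of the three admissible pairs: (A) with source $S$, target $T$ giving $\epsilon_T(h)\le \epsilon_S(h)+\tfrac12 d_{ST}+\lambda_{ST}+O(\sqrt{1/m})$; (B) with source $M$, target $T$ giving $\epsilon_T(h)\le \epsilon_M(h)+\tfrac12 d_{MT}+\lambda_{MT}+O(\sqrt{1/m})$; and (C) with source $S$, target $M$ giving $\epsilon_M(h)\le \epsilon_S(h)+\tfrac12 d_{SM}+\lambda_{SM}+O(\sqrt{1/m})$. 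Since $\mathcal{U}_T$ carries no usable labeling and never acts as a source, these are the only three pairs that matter, matching the paper's restriction.

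The key technical step is a subadditivity lemma for the merged source domain $\mathcal{D}_{SM}=\tfrac12\mathcal{D}_S+\tfrac12\mathcal{D}_M$, which realizes the ``combining $D_S$ and $D_M$ into a single domain'' idea. Because the risk and the event probabilities in the definition $d_{\mathcal{H}\Delta\mathcal{H}}(\mathcal{D}_1,\mathcal{D}_2)=2\sup_{g\in\mathcal{H}\Delta\mathcal{H}}|P_{\mathcal{D}_1}(g)-P_{\mathcal{D}_2}(g)|$ are linear in the distribution, I would show $\epsilon_{SM}(h)=\tfrac12\epsilon_S(h)+\tfrac12\epsilon_M(h)$ and, via the triangle inequality applied inside the supremum, $d_{(SM)T}\le \tfrac12 d_{ST}+\tfrac12 d_{MT}$. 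Applying Eq.~\eqref{bound} to the pair $\{SM,T\}$ and substituting this decomposition then yields $\epsilon_T(h)\le \tfrac12\epsilon_S(h)+\tfrac12\epsilon_M(h)+\tfrac14 d_{ST}+\tfrac14 d_{MT}+\lambda_{(SM)T}+O(\sqrt{1/m})$, which already produces the target $\tfrac14$ coefficient on the divergences.

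The remaining accounting closes the gap to the stated form. For the constant term, $\lambda_{(SM)T}=\tfrac12\epsilon_S(h^*)+\tfrac12\epsilon_M(h^*)+\epsilon_T(h^*)\le \epsilon_S(h^*)+\epsilon_M(h^*)+\epsilon_T(h^*)=\tilde\lambda$, so replacing $\lambda_{(SM)T}$ by $\tilde\lambda$ keeps the inequality valid; equivalently one checks $\tfrac12(\lambda_{ST}+\lambda_{MT}+\lambda_{SM})=\tilde\lambda$ if one prefers to average (A), (B) and the $\epsilon_M$-substituted form of (C). The third divergence $d_{SM}$ is then folded in using $\tfrac14 d_{SM}\ge 0$, so that $\tfrac14(d_{ST}+d_{MT})\le \tfrac14(d_{ST}+d_{MT}+d_{SM})=\tfrac14\, d_{\mathcal{H}\Delta\mathcal{H}}(\mathcal{U}_S,\mathcal{U}_M,\mathcal{U}_T)$, matching the definition in the statement. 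Finally the three $O(\sqrt{1/m})$ terms, each sharing the rate $\sqrt{1/m}$, collapse into a single $\tilde O(\sqrt{1/m})$.

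I expect the main obstacle to be the bookkeeping around the intermediate risk $\epsilon_M(h)$: the merged-source derivation naturally leaves a residual $\tfrac12\epsilon_M(h)$, whereas the statement exhibits only $\tfrac12\epsilon_S(h)$. Reconciling this requires either reading $\tfrac12\epsilon_S(h)$ as the combined-source contribution under the merge, or trading $\epsilon_M(h)$ against $\epsilon_S(h)$ through bound (C) while tracking how that feeds back into the $\epsilon_S$ coefficient and the divergence weights. Getting the coefficients to land \emph{exactly} at $\tfrac12$ on the source risk and $\tfrac14$ on the summed divergence, rather than a looser constant, is the delicate part; the subadditivity lemma for $d_{(SM)T}$ and the nonnegativity slack in $\tilde\lambda$ and $d_{SM}$ are what make the target form attainable.
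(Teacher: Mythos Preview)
Your proposal is correct in spirit but follows a more elaborate route than the paper. The paper simply adds the three inequalities (A), (B), (C) with a common weight $\alpha=1/3$: because $\epsilon_M(h)$ appears exactly once on the left-hand side (from (C)) and once on the right-hand side (from (B)), it cancels outright, leaving
\[
2\alpha\,\epsilon_T(h)+\alpha\,\epsilon_M(h)\le 2\alpha\,\epsilon_S(h)+\alpha\,\epsilon_M(h)+\tfrac{\alpha}{2}\,d_{\mathcal{H}\Delta\mathcal{H}}(\mathcal{U}_S,\mathcal{U}_M,\mathcal{U}_T)+\alpha(\lambda_1+\lambda_2+\lambda_3)+\tilde O(\sqrt{1/m}),
\]
from which the stated form is read off after identifying $\tilde\lambda$ with the combined optimal risk. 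There is no merged distribution $\mathcal D_{SM}$, no subadditivity lemma for $d_{(SM)T}$, and the ``main obstacle'' you anticipate---the residual $\tfrac12\epsilon_M(h)$---never arises, since the cancellation happens at the level of the raw sum before any rescaling. Your merged-source construction is valid and arguably more principled as domain-adaptation theory (the subadditivity $d_{(SM)T}\le\tfrac12 d_{ST}+\tfrac12 d_{MT}$ is a genuine lemma), but it is heavier machinery than needed here; the parenthetical remark you make about ``averaging (A), (B) and the $\epsilon_M$-substituted form of (C)'' is in fact much closer to the paper's actual one-step argument than your main line.
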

	
	\begin{proof}
		We have the bound proposed in Eq. \ref{bound} for each source-target domain pair. Assume all domains have a finite sample of size $m$; the bounds for all pairs can be listed as follows:
		\begin{align}
			\epsilon_T(h) \leq \epsilon_S(h) + \frac{1}{2}d_{\mathcal{H}\Delta\mathcal{H}}(\mathcal{U}_S, \mathcal{U}_T) + \lambda_1 + O(\sqrt{1/m}) \label{f1} \\
			\epsilon_T(h) \leq \epsilon_M(h) + \frac{1}{2}d_{\mathcal{H}\Delta\mathcal{H}}(\mathcal{U}_M, \mathcal{U}_T) + \lambda_2 + O(\sqrt{1/m}) \label{f2} \\
			\epsilon_M(h) \leq \epsilon_S(h) + \frac{1}{2}d_{\mathcal{H}\Delta\mathcal{H}}(\mathcal{U}_S, \mathcal{U}_M) + \lambda_3 + O(\sqrt{1/m}) \label{f3}
		\end{align}
		Then we combine the convex upper bounds \ref{f1}, \ref{f2} and \ref{f3} with a interpolate parameter $\alpha=1/3$ as follows,
		\begin{equation}
			\begin{split}
				2\alpha \epsilon_T(h) + \alpha \epsilon_M(h) &\leq 2\alpha \epsilon_S(h) + \alpha \epsilon_M(h) \\
				& + \frac{\alpha}{2}(d_{\mathcal{H}\Delta\mathcal{H}}(\mathcal{U}_S, \mathcal{U}_M, \mathcal{U}_T)) + \\
				& + \alpha (\lambda_1 + \lambda_2 + \lambda_3) + \tilde{O}(\sqrt{1/m}),
			\end{split}		
		\end{equation} 
		where $\tilde{O}(\sqrt{1/m})$ is a interpolated empirical function. We simply replace $\lambda$ with $\tilde{\lambda}:= \alpha (\lambda_1 + \lambda_2 + \lambda_3) = \epsilon_S(h^*) + \epsilon_M(h^*) + \epsilon_T(h^*)$ to get our combined upper bound.
	\end{proof}

	\subsection{Connection to Our Objective Function} 
	
	For each $d_{\mathcal{H}\Delta\mathcal{H}}(\mathcal{U}, \mathcal{U}')$, we know that
	\begin{equation}
		\begin{split}
			d_{\mathcal{H}\Delta\mathcal{H}}(\mathcal{U}, \mathcal{U}') &= 2\sup_{h, h' \in \mathcal{H}\Delta\mathcal{H}} |P_{\mathcal{U}}(h \neq h') - P_{\mathcal{U'}}(h \neq h') | \\
			& \leq 2\sup_{h \in \mathcal{H}} |P_{\mathcal{U}}(h = 0) - P_{\mathcal{U'}}(h = 1)| \\
			& = 2\sup_{h \in \mathcal{H}} (P_{\mathcal{U}}(h = 0) + P_{\mathcal{U'}}(h = 1) - 1)
		\end{split}
	\end{equation}
	
	% ================================================= progress figure ===========================================
	
\begin{figure*}[pt]
	\vspace{-5pt}
	\centering
	\includegraphics[width=0.82\textwidth,height=0.25\textheight]{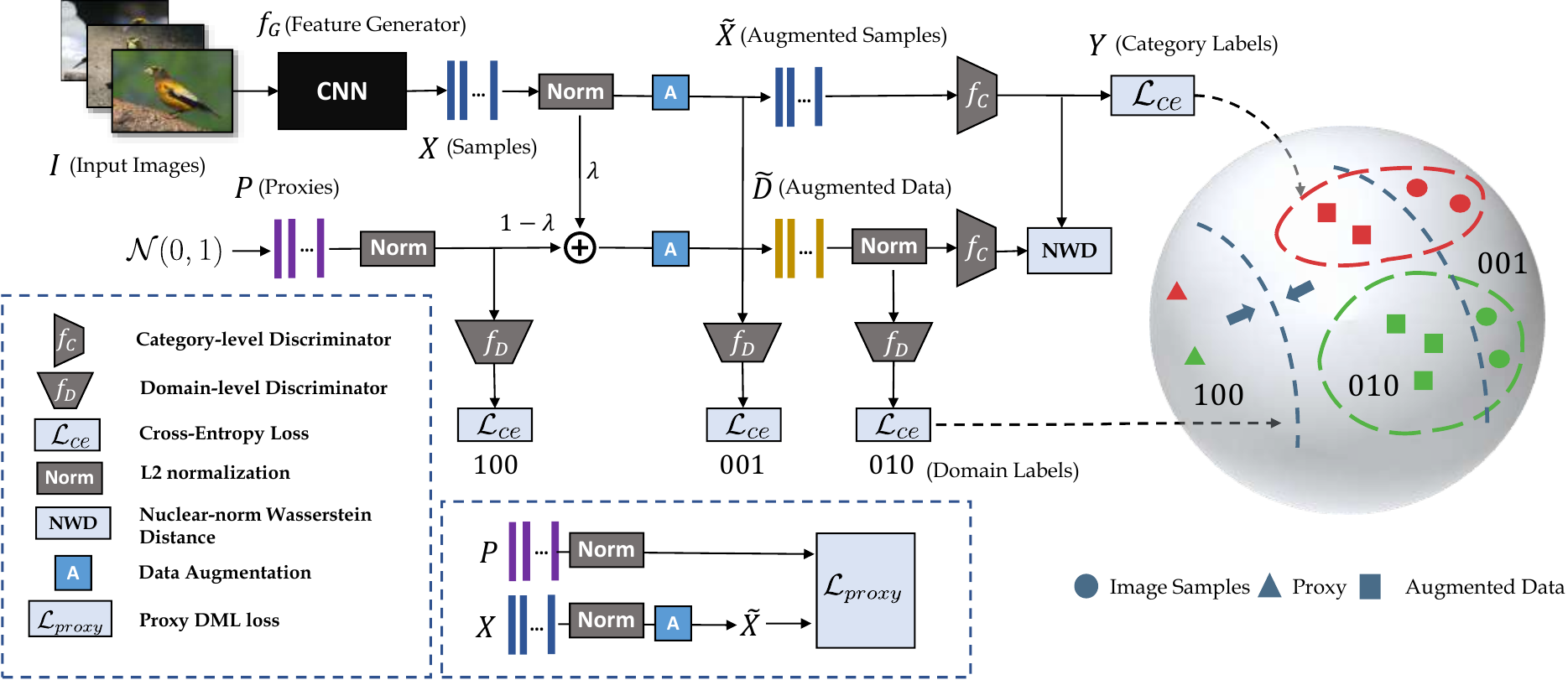}
	\caption{The overview of our framework. The input images are embedded with a CNN encoder. The proxies are randomly sampled and mixed with the embedding of image samples. Then a domain-level classifier $f_D(\cdot)$ and a category-level classifier $f_C(\cdot)$ are trained to predict the domain and class property of each sample and proxy. With the adversarial training paradigm, the features and proxies are moved to fool the $f_C(\cdot)$ and minimize the discrepancy of prediction of $f_D(\cdot)$. The dashed lines on the surface represent the surface boundary of our discriminators. The gradient from the adversarial learning pushes the data samples in the opposite direction from the separation of the boundary, which aligns the data and proxies.}
	\label{fig:framework}
	\vspace{-10pt}
\end{figure*}	

% ============================================================================================================
	
	Similarly, we can also relax our $h, h'$ from $\mathcal{H}\Delta\mathcal{H}$ space by specific class labels where $h=0, h=1, h=2$. Thus, we get the bound of $d_{\mathcal{H}\Delta\mathcal{H}}(\mathcal{U}_S, \mathcal{U}_M, \mathcal{U}_T)$ as follows,
	\begin{equation}
		\begin{aligned}
			&d_{\mathcal{H}\Delta\mathcal{H}}(\mathcal{U}_S, \mathcal{U}_M, \mathcal{U}_T) \leq \\ & 2\sup_{h \in \mathcal{H}} (P_{\mathcal{U_S}}(h = 0) + P_{\mathcal{U}_M}(h = 1) + P_{\mathcal{U_T}}(h = 2)),
		\end{aligned}
	\end{equation}
	When the combined risk $\tilde{\lambda}$ in \ref{theorem} is achieved by optimized hypothesis as a small constant, the bound mainly depends on the term $d_{\mathcal{H}\Delta\mathcal{H}}(\mathcal{U}_S, \mathcal{U}_M, \mathcal{U}_T)$ and $\epsilon_S(h)$ when $m$ is sufficiently large.
	
	The term $d_{\mathcal{H}\Delta\mathcal{H}}(\mathcal{U}_S, \mathcal{U}_M, \mathcal{U}_T)$ is the min-max optimize goal of our domain-level discriminator. And the category-level discriminator optimizes the risk $\epsilon_S(h)$. This shows that optimizing the risk in the target domain needs a combination of domain and category-level discriminators. Besides, the theory in \cite{chen2022reusing} also conduct that $|\epsilon_S(h, h') - \epsilon_T(h, h')| \leq d_{\mathcal{H}\Delta\mathcal{H}}(\mathcal{D}_S, \mathcal{D}_T)$ is also bounded by NWD with K-Lipschits constraint where $|\epsilon_S(h, h') - \epsilon_T(h, h')| \leq 2K \mathcal{L}_d(\nu_s, \nu_t)$. Here we explained the intuition behind our proposed objective.
	
	\subsection{Connection between Proxy Loss and $\epsilon_T(h)$} 
	
	Here, we try to roughly explain why making domain adaptation would help the proxy-based DML. We take Proxy-NCA loss as an example. Recall that Proxy-NCA has the following form:
	\begin{equation}
		\label{loss_panc}
		\mathcal{L}_{proxy}(X, P)=\sum_{x \in X}-\log \frac{e^{d\left(x, p^{+}\right)}}{\sum_{p^{-} \in P^{-}} e^{d\left(x, p^{-}\right)}},
	\end{equation}
	The NCA loss initially follows the design of the ``leave-one-out'' classification paradigm \cite{goldberger2004neighbourhood} where we try to label the sample $x$ and anchor $p$ into the same class (or to select $p$ as a neighbor of x with the same label). We know that optimizing the NCA loss maximizes the probability of labeling the sample $x$ and $p$ to the same class while pushing them from others. On the other hand, we have,
	\begin{equation}
		\begin{split}	
			& \frac{1}{2}d_{\mathcal{H}\Delta\mathcal{H}}(D_S, D_T) \geq |\epsilon_S(h(x)) - \epsilon_T(h(p))| \\
			& = |\mathbb{E}_{x \sim D_S}|h(x) - y| - \mathbb{E}_{p \sim D_T}|h(p) - y'||
		\end{split}
	\end{equation}
	By aligning distributions $(D_S, D_T)$ and labeling space $(y, y')$, we know that $\frac{1}{2}d_{\mathcal{H}\Delta\mathcal{H}}(D_S, D_T) \geq \mathbb{E}_{x, p \sim D}|h(x) - h(p)|$ which has the same optimizing target to the Proxy-NCA: to minimize the risk labeling $x$ and $p$ in different classes. Proxy-Anchor or other Proxy-based losses also follow this paradigm with modified NCA losses so that they initially consist of the same target.

\section{Comparison with Related Methods}
\label{appendix:related}
In this section, we discuss two related works, XBM \cite{wang2020cross} and MemVir \cite{ko2021learning}, that are intuitively close to the idea of our and other proxy-based DML works. Our approach and the XBM have something in common since both of us attempt to compare data representations with some other representations (XBM compares the data representations while ours compares class representations) that are unrelated to the training batch. MemVir, on the other hand, proposes to extend the space of the class by gradually adding the representations and class weights that are out of the batch. They all attempt to reduce the shift between the samples in batch and out of batch, which is also called \I{semantic drift}. However, there is a fundamental difference between proxy-based and XBM-related approaches, where proxy-based DML chooses to directly update the class representations in the whole data space while XBM approaches select to update a subset of data outside the batch.

%------------------------------------------------------------------------
\section{Extension to Other Backbones}
\label{appendix:extension}

Note that this paper compares our method with existing state-of-the-art proxy-based methods and other popular DML approaches that embed the images with CNN backbones (ResNet50 and InceptionBN). Some recent works that apply \I{Vision Transformers} \cite{el2021vit, tan2021instance}, which is pre-trained on larger datasets (ImageNet21K) or with extremely large batch sizes \cite{ermolov2022bolic, patel2022recall} are beyond the scope of this paper due to our limited computing resources. We anticipate that our method can be further extended with Transformer encoders (ViT) to boost its performance with larger GPU memory or parallel training with multi-GPUs in future works. We leave this extension to subsequent studies.
	
\begin{figure}[pt]
	\centering
	\includegraphics[width=0.35\textwidth,height=0.20\textheight]{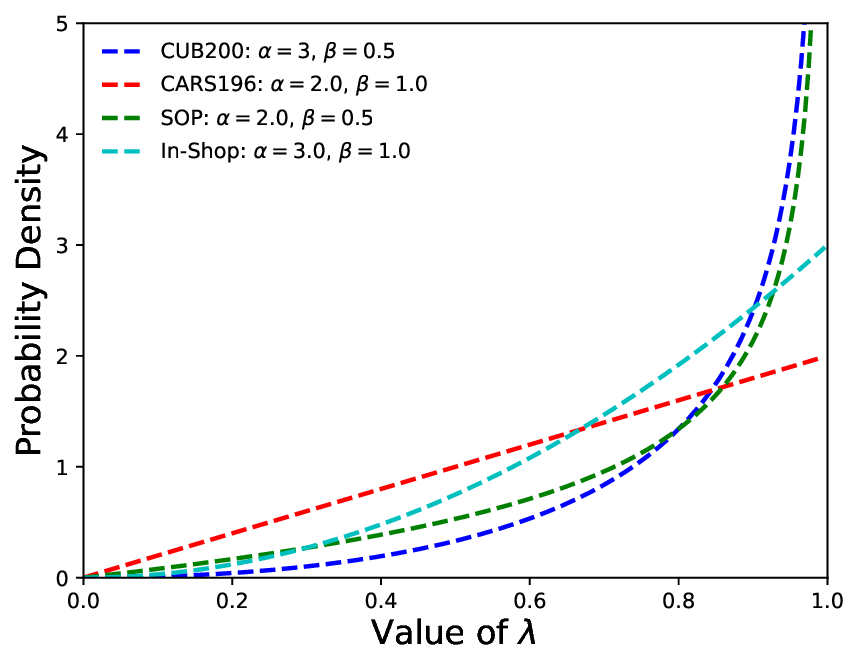}
	\caption{Illustrate the optimized beta distribution on different datasets in our experiments.}
	\label{fig:beta}
	\vspace{-10pt}
\end{figure}

%==================================================================================

\section{Additional Studies}
\label{appendix:additional}

\subsection{Study of the Mixing Distribution}

We study the mixing domain's performance with various sampling distributions. We evaluate the performance of our method under different probability density functions based on the $\alpha, \beta$. We discover that the optimized sampling density varies in different datasets. This is because of the variety of data distribution in the initial space. When the $\lambda$ sampling approach to $0.0$ is equal to putting the target proxy directly as the target space without any mixing, and vice versa. As illustrated in Figure \ref{fig:beta}, we observe that the $\lambda$ tends to be sampled larger than 0.5 for all datasets but with different probabilities, which means the mixed intermediate data domain tends to have more semantic information from the source data domain in all datasets. This also consists of our assumption that the proxy profoundly needs to be connected with the informative source domain during the DML learning progress.

% ================================================= t-sne figure ===========================================
\begin{figure*}[pt]
	\centering
	
	\begin{subfigure}[t]{0.30\textwidth}
		\includegraphics[width=\textwidth, height=0.16\textheight]{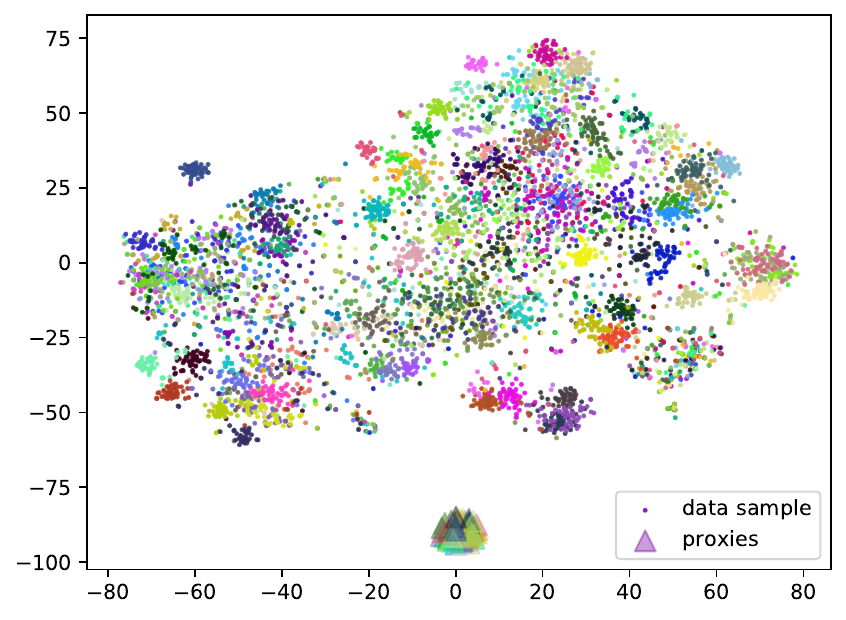}
		\caption{The Initial Space}
		\label{fig:tsne:init}
	\end{subfigure}
	\hspace{-5pt}
	\begin{subfigure}[t]{0.30\textwidth}
		\includegraphics[width=\textwidth, height=0.16\textheight]{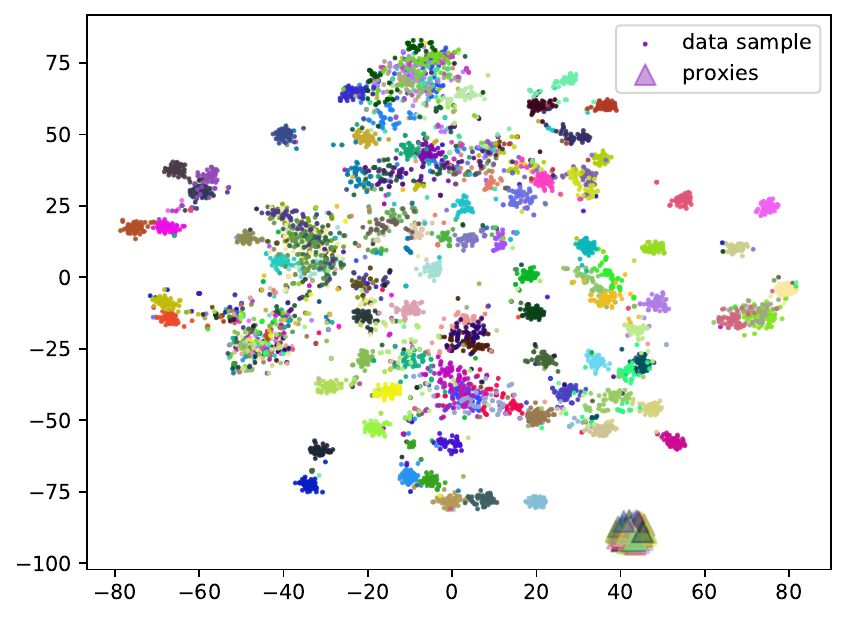}
		\caption{The Space of the Original PA}
		\label{fig:tsne:base}
	\end{subfigure}
	\hspace{-5pt}
	\begin{subfigure}[t]{0.30\textwidth}
		\includegraphics[width=\textwidth, height=0.16\textheight]{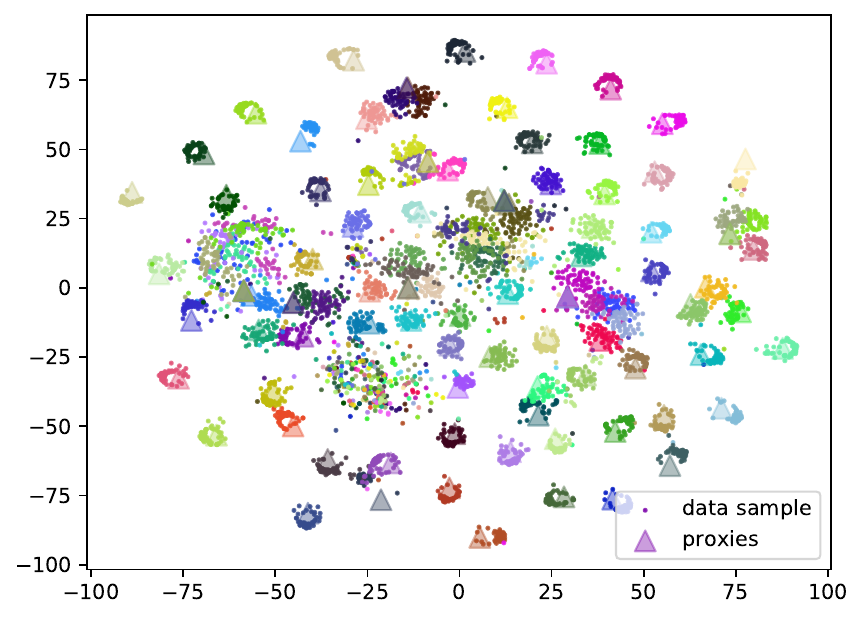}
		\caption{The Space of our PA-DADA}
		\label{fig:tsne:adv}
	\end{subfigure}
	\caption{Illustrate the space with the t-SNE visualizations of sample representations and corresponding proxy vectors on the part of the training set of CUB-200-2011. Each class is labeled with a unique color. We demonstrate that the embedding space of our method looks well clustered, and the proxies are sufficiently separated and close to the clustered sample data, while the proxies of the original PA are not well separated and still maintain their own distribution.}
	\label{fig:tsne}
\end{figure*}

\subsection{Study of Confusion Matrix}
We compare the predicted probability on CUB200 from our category-level discriminator between the data source domain and target proxies. We repeat the comparison on the PA baseline, where we also train a classifier with the same architecture and initial parameters (so that the prediction on the source data is the same as our proposed method). The Figure \ref{fig:mat} illustrates the confusion matrix. It is obvious that our proposed method has better consistency between the distribution of data samples and the proxies.

\subsection{Study of the Adaptation Groups} Another important factor that affects the results is where to apply the domain adaptation. To investigate this we try domain adaptation on different combinations of data in the set $\tilde{X}$, $\tilde{D}$ and $P$. We study the efficiency of our domain-level discriminator under three different settings: (1) the original space with only the augmented samples $\tilde{X}$ and the original proxies $P$ adapted (labeled as $\tilde{X}$ + $P$), (2) the setting with only the augmented samples and $\tilde{X}$ and augmented data $\tilde{D}$ (labeled as $\tilde{X}$ + $\tilde{D}$), and (3) setting with all three set of data as proposed in main paper (labeled as $\tilde{X}$ + $\tilde{D}$ + $P$). For the settings (1) and (2), we set the domain label to 01 and 10 instead. As illustrated in Figure \ref{fig:parameter:r1}, the $\tilde{X}$ + $\tilde{D}$ + $P$ setting achieves the best overall performance and specifically overcomes the $\tilde{X}$ + $P$ setting for a large margin. This demonstrates the efficiency of the augmented data for domain adaptation. 

\section{Detailed Experimental Settings}
\label{appendix:setting}

\subsection{Pre-Processing} 
We follow the standard pre-processing procedure proposed in existing works for fairness. Specifically, we resize the image to $224 \times 224$, do random resized cropping, and random horizontal flipping with probability 0.5. In the test phase, the images are first resized to $256 \times 256$, then cropped back to $224 \times 224$.

% ================================================= parameter figure ===========================================

\begin{figure*}[t]
	\centering
	\begin{subfigure}[t]{0.24\textwidth}
		\includegraphics[width=\textwidth, height=0.15\textheight]{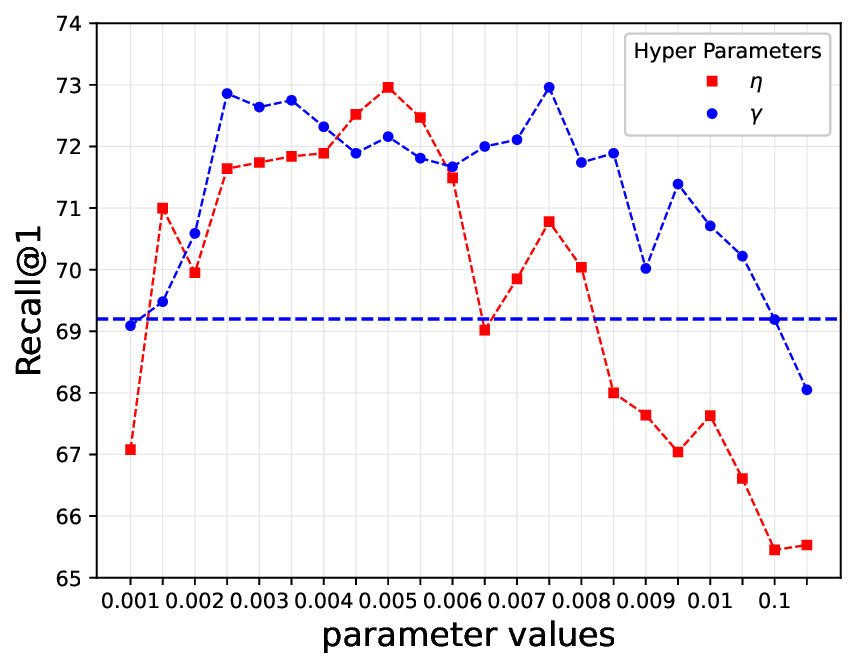}
		\caption{Impact of $\eta$ and $\gamma$}
		\label{fig:parameter:par}
	\end{subfigure}
	\hspace{-5pt}
	\begin{subfigure}[t]{0.24\textwidth}
		\includegraphics[width=\textwidth, height=0.15\textheight]{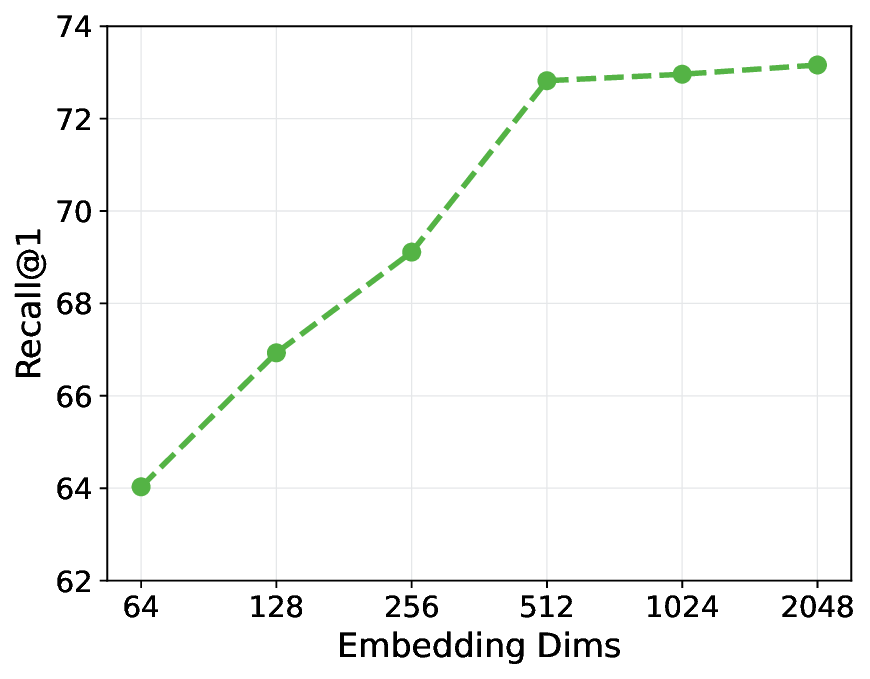}
		\caption{Impact of dimension}
		\label{fig:parameter:dim}		
	\end{subfigure}
	\hspace{-5pt}
	\begin{subfigure}[t]{0.24\textwidth}
		\includegraphics[width=\textwidth, height=0.15\textheight]{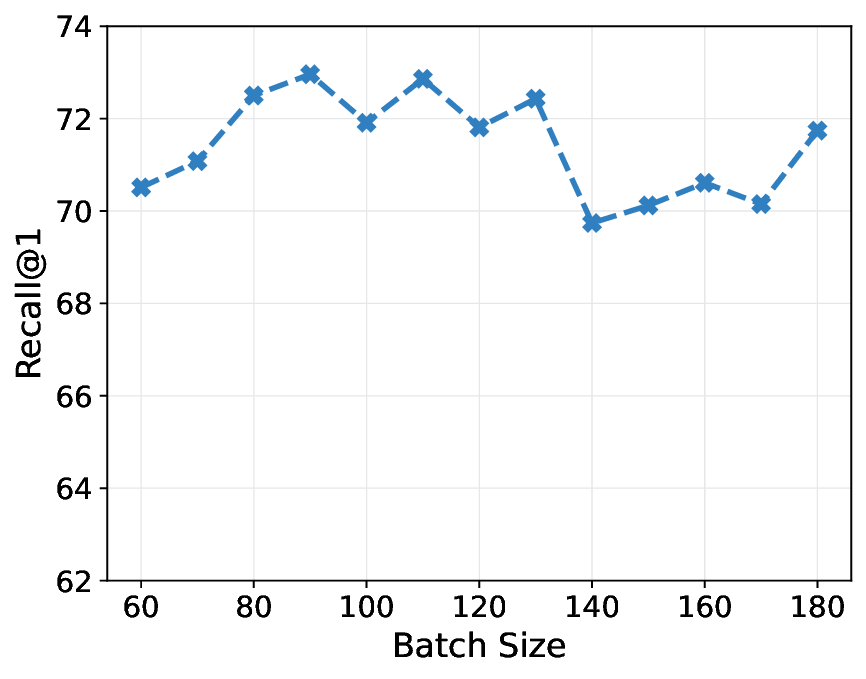}
		\caption{Impact of batch size}
		\label{fig:parameter:batch}
	\end{subfigure}
	\hspace{-5pt}
	\begin{subfigure}[t]{0.24\textwidth}
		\includegraphics[width=\textwidth, height=0.15\textheight]{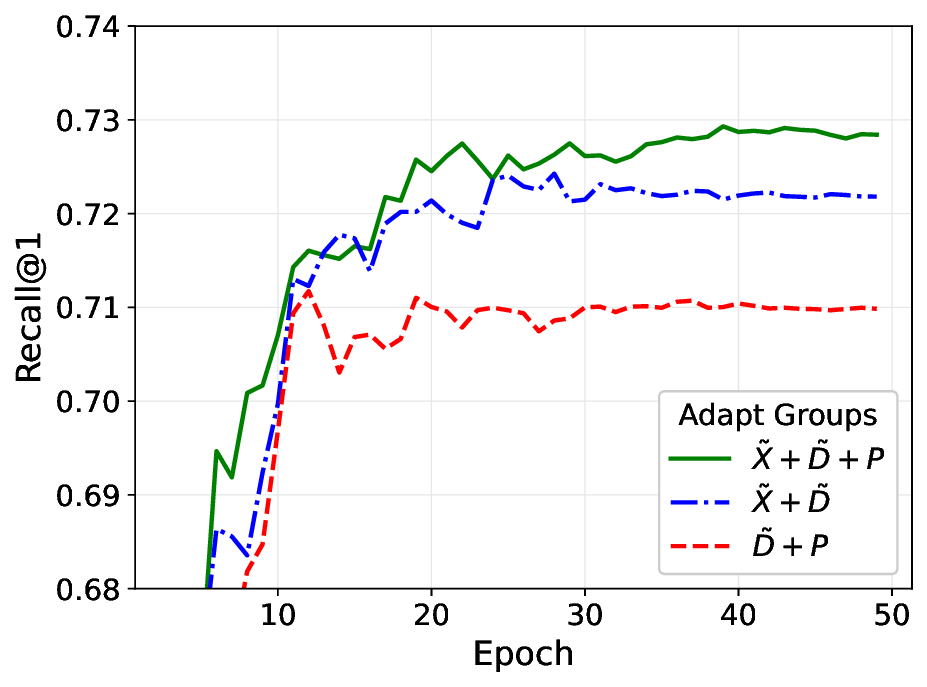}
		\caption{Compare Adapted Groups}
		\label{fig:parameter:r1}
	\end{subfigure}
	
	\caption{Illustrate the impact of our hyperparameters. The blue line represents our Proxy-Anchor baseline. The blue horizontal line indicates our PA baseline.}
	\label{fig:parameter}
	
\end{figure*}

\subsection{Parameter Searching}
We empirically search the dimensions of each hidden layer of our discriminators in a discrete range of $\{128, 256, 512\}$. We search the second layer of the category-level classifier according to the number of classes in each dataset. We eventually set our domain-level discriminator as a single-layer MLP with a 512-dimension hidden layer. Our category-level discriminator is constructed as a 2-layer MLP where the first hidden layer has 512 dimensions.

We search our hyper-parameters $\{\eta, \gamma\}$ from a grid where $\eta, \gamma$ are searched in a range from $10^{-2}$ to $1.0$. For the parameter $\alpha, \beta$ in Beta distribution, we also search them within the range $\alpha, \beta \in [0.5, 5]$ since they control the probability of sampling between the two domains. We adopt the parameters of baseline Proxy-NCA and Proxy-Anchor as their suggested values in the original paper where the scaling factor $\tau=32$, margin $\delta=0.1$. In each iteration of training, we train the discriminator with $k=3$ times while training the generator for a single time. We adopt a warm-up phase where all parameters are frozen except the linear head of the generator, as suggested in \cite{roth2022non}.

\subsection{Impact of Hyperparameters} We observe in Figure \ref{fig:parameter} that our discriminators are effective when the balancing factor satisfies $ 0.002 \leq \eta \leq 0.006$ and $0.002 \leq \gamma \leq 0.009$. In this range, the performance starts to overcome the settings with a single classifier (R@1 $>70.0$). Besides, we notice that the adversarial training would not support or even damage the original metric learning system when $\eta \geq 0.08, \gamma \geq 0.1$.

We also observe that the large batch size is not a sufficient condition to achieve better performance for our method (as illustrated in Figure \ref{fig:parameter:batch}) since the batch size and batch sampling also affect the quality of our mixed positive samples. Figure \ref{fig:parameter:dim} also illustrates that the performance of our learning system improves with the dimension size of the features. Even though we achieve better results on larger dimensions, we only report the results on dimension 512 for fairness with other works. Note that our method is based on Proxy-Anchor where the proxy collects information across batches. Thus, the performance of our method is not sensitive to batch size (see Figure \ref{fig:parameter:batch}).

\subsection{Training Time and Memory} 
When utilizing our adversarial training, the additional cost of time and memory is very limited in our framework. In our experimental machine, the training time on CUB200 for a single epoch of our proposed method is $37.3\pm0.4s$ (average in 10 epochs), including the evaluation time, when the original PA baseline holds the running time of $35.2\pm0.3s$. Even though we train the discriminators in $k=3$ times in each iteration, the increased time for this additional part is only around $2s(5\%)$. This is because our discriminators are shallow (1 and 2 layers), and the computational complexity does not change. In terms of memory, our proposed method takes 11.3GB of GPU memory when the original baseline takes 11.2GB for the default batch size of 90. Although we increase the number of samples by mixing the features, the increased space is negligible compared to the baseline since our proposed method does not need additional forward propagation for the increased samples.

\section{Broader Impact and Limitation}
\label{appendix:limits}
We believe our DADA work would serve as an excellent example to inspire other researchers on proxy-based DML to focus on capturing space distributions. By investigating more solutions to compare and align the domains, future contributions would further push their limits under an aligned space with unique data distribution. The possible limitation of DADA is its learning efficiency on datasets with a large number of classes, which may cause difficulty in extending to other learning paradigms, such as continuous learning. We will focus on this interesting challenge in future works.

% ================================================= correlation figure ===========================================
	\begin{figure*}[pt]
		\centering
		\begin{subfigure}[t]{0.30\textwidth}
			\includegraphics[width=\textwidth, height=0.16\textheight]{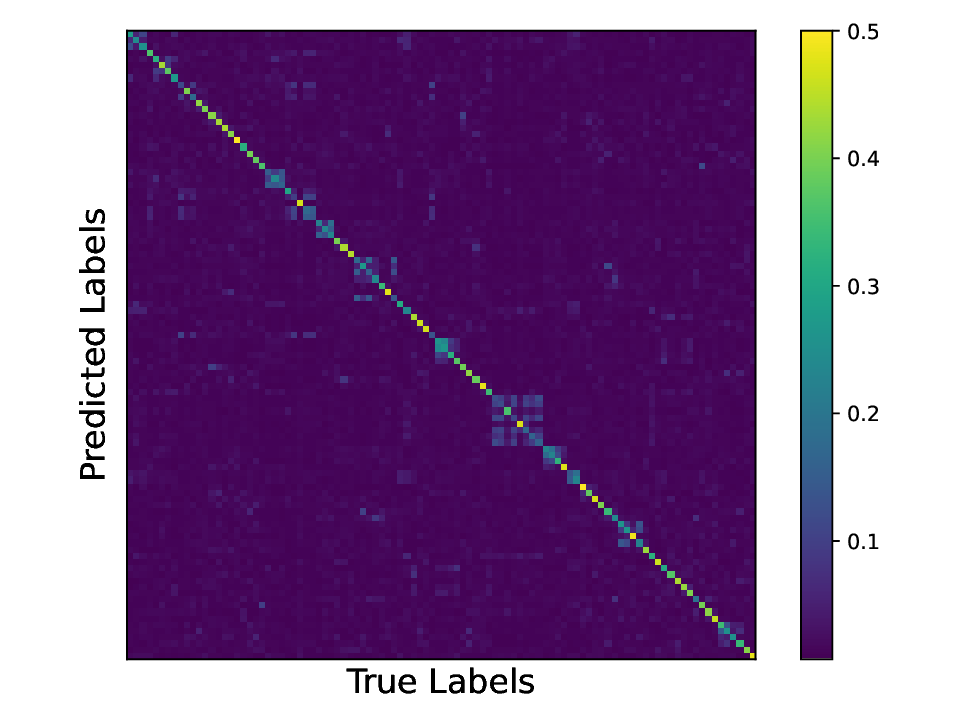}
			\caption{Source data sample}
			\label{fig:mat:source}
		\end{subfigure}
		\hspace{-5pt}
		\begin{subfigure}[t]{0.30\textwidth}
			\includegraphics[width=\textwidth, height=0.16\textheight]{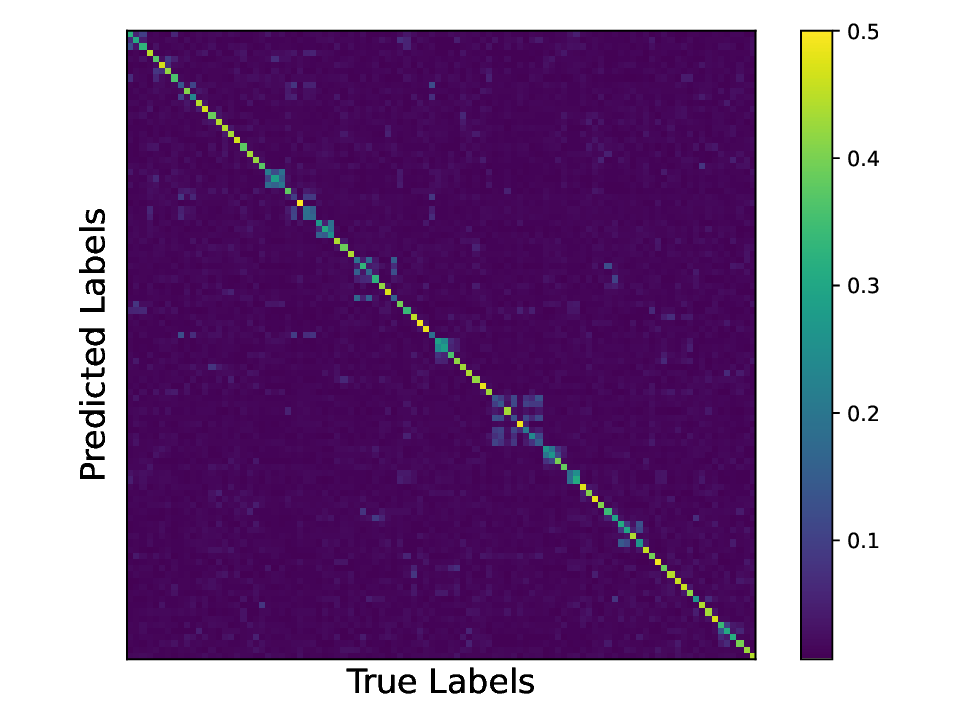}
			\caption{Target proxies of our proposed method}
			\label{fig:mat:adv}
		\end{subfigure}
		\hspace{-5pt}
		\begin{subfigure}[t]{0.30\textwidth}
			\includegraphics[width=\textwidth, height=0.16\textheight]{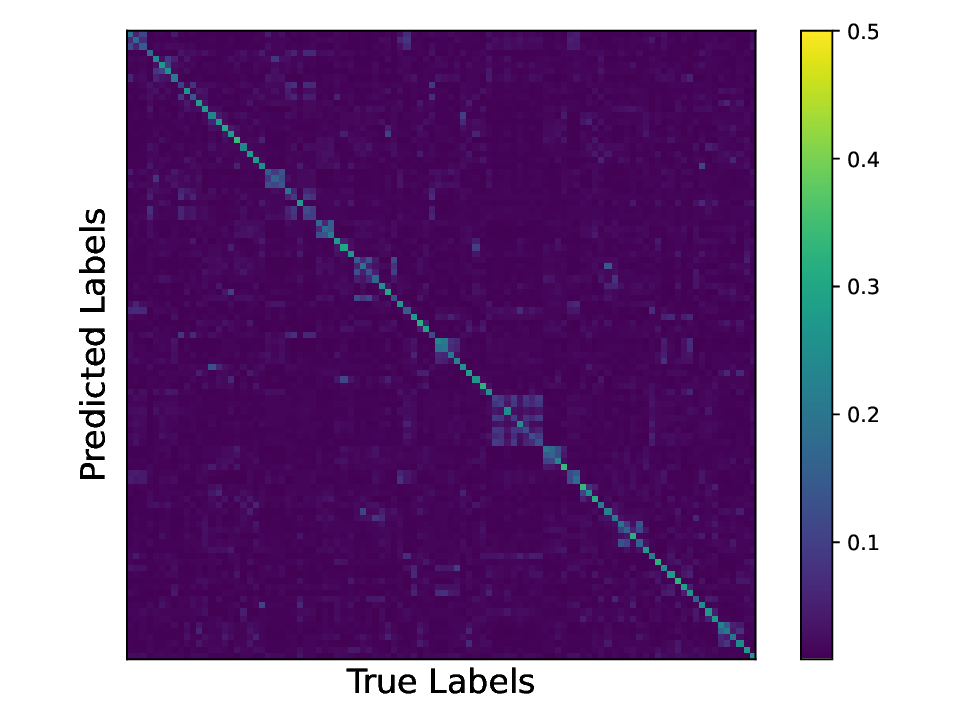}
			\caption{Target proxies of PA baseline}
			\label{fig:mat:base}
		\end{subfigure}
		\caption{Illustrate the confusion matrix on CUB200 (100 classes) from our category-level discriminator. We also illustrate the confusion matrix of the PA baseline where we also train a classifier with the same architecture and initialed parameters in baseline training.}
		\label{fig:mat}
	\end{figure*}
	%============================================================================================================

%-------------------------------------------------------------------------
\begin{figure}[pt]
	\vspace{-5pt}
	\centering
	\includegraphics[width=0.45\textwidth,height=0.25\textheight]{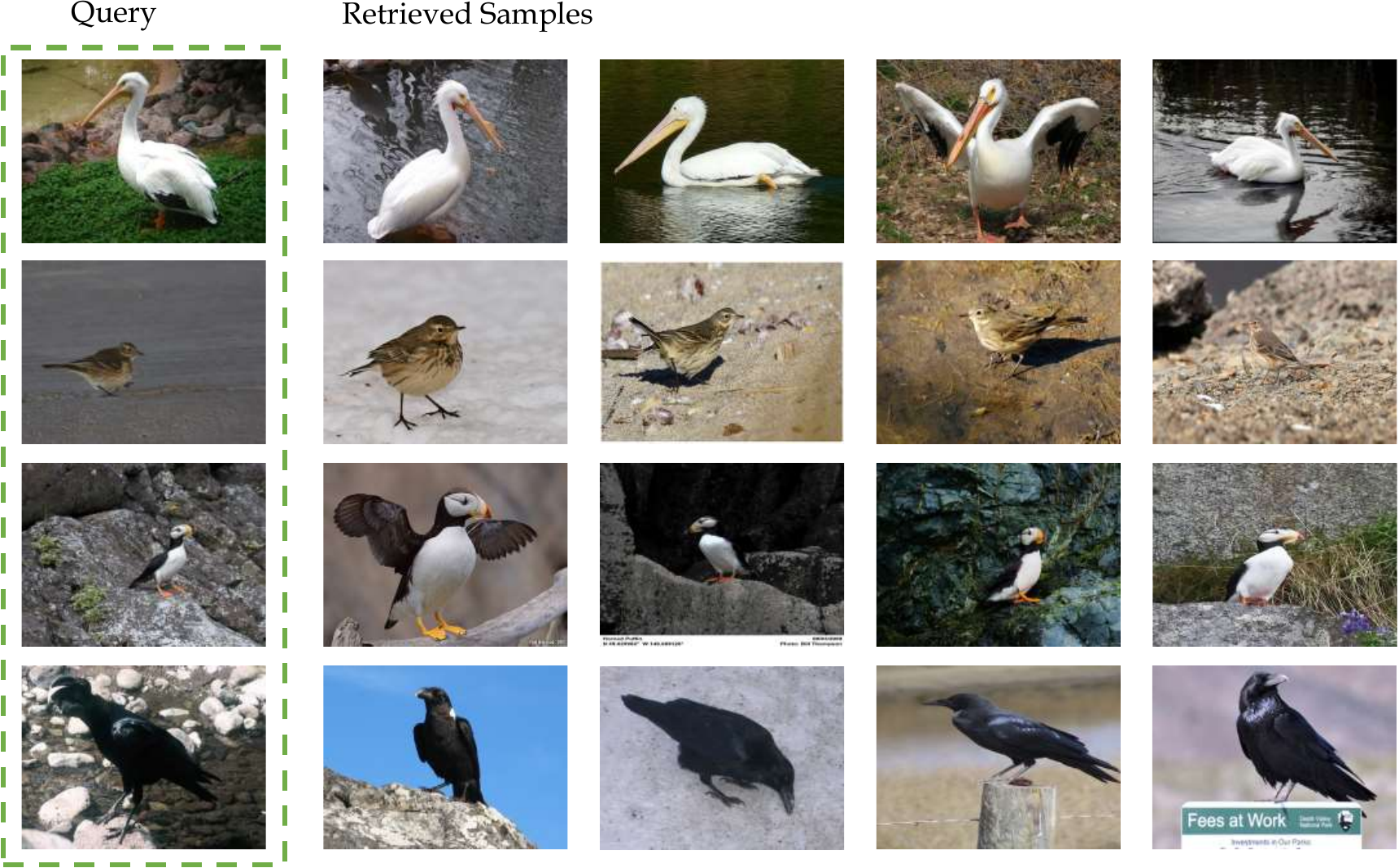}
	\caption{Illustrate the retrieved examples on CUB200 test set. Ranked by similarity score.}
	\label{fig:example_1}
	\vspace{-10pt}
\end{figure}

\begin{figure}[pt]
	\vspace{-5pt}
	\centering
	\includegraphics[width=0.45\textwidth,height=0.25\textheight]{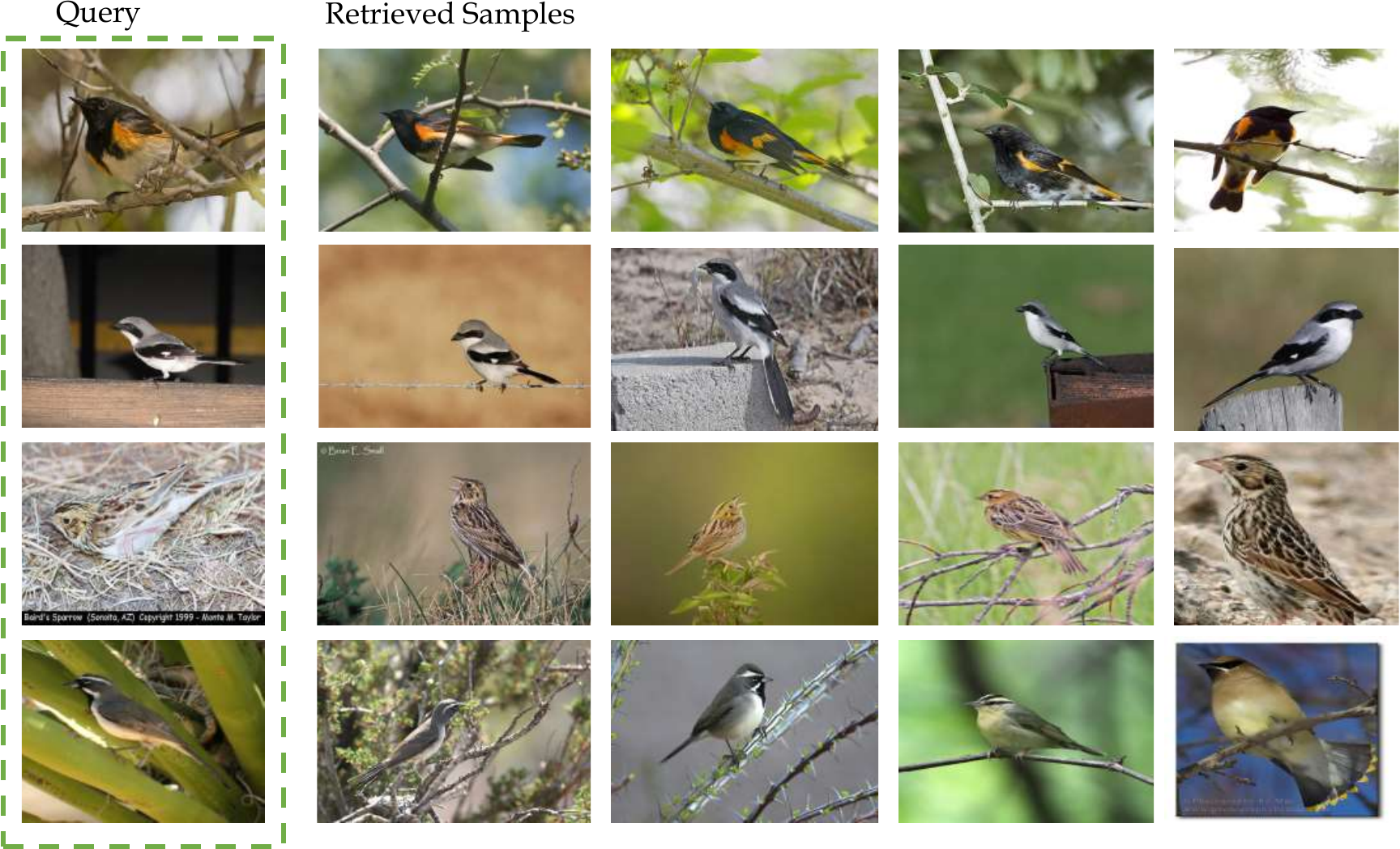}
	\caption{Illustrate the retrieved examples on CUB200 test set. Ranked by similarity score.}
	\label{fig:example_2}
	\vspace{-10pt}
\end{figure}

\begin{figure}[pt]
	\vspace{-5pt}
	\centering
	\includegraphics[width=0.45\textwidth,height=0.25\textheight]{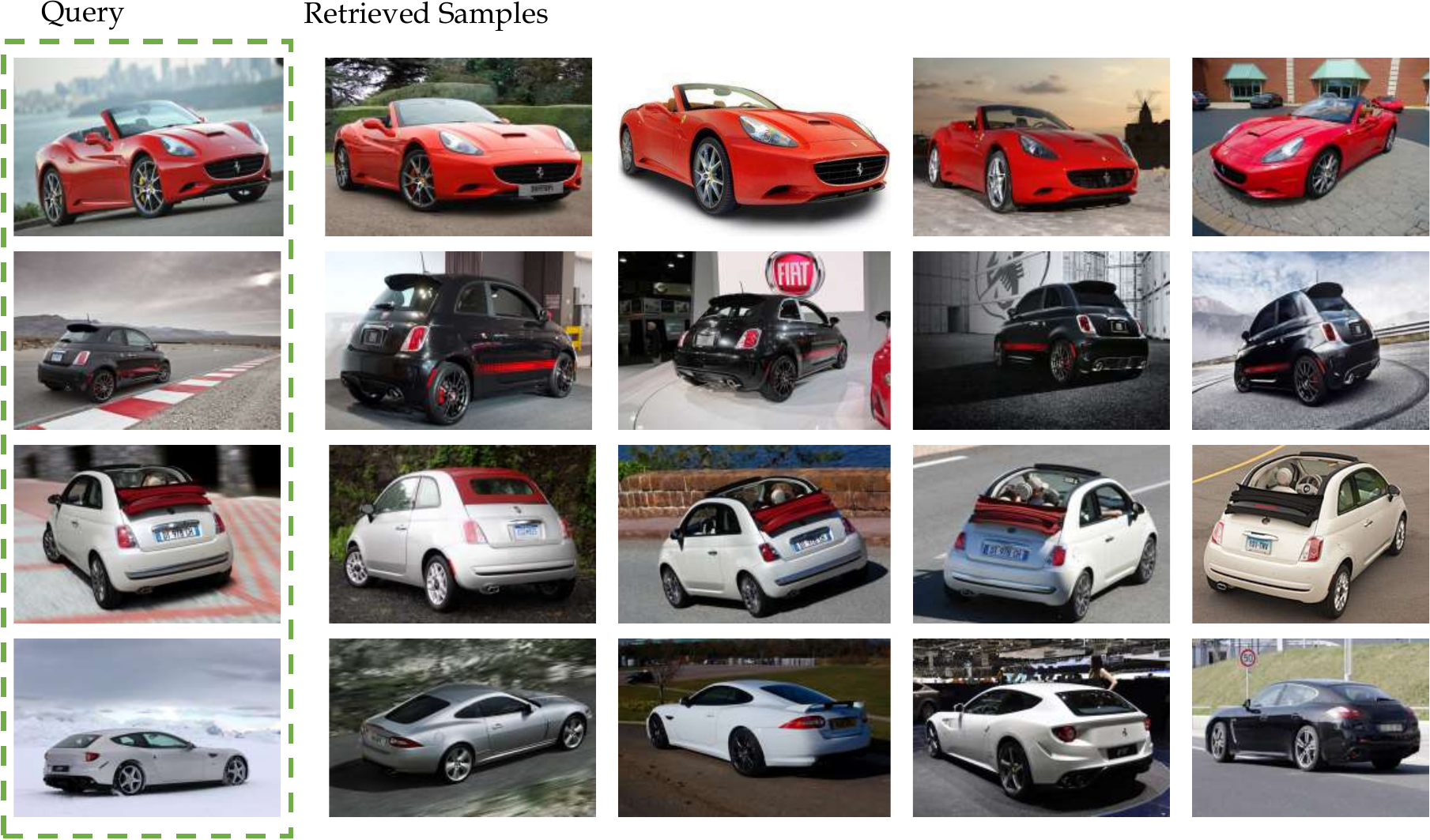}
	\caption{Illustrate the retrieved examples on CARS196 test set. Ranked by similarity score.}
	\label{fig:example_3}
	\vspace{-10pt}
\end{figure}

\begin{figure}[pt]
	\vspace{-5pt}
	\centering
	\includegraphics[width=0.45\textwidth,height=0.25\textheight]{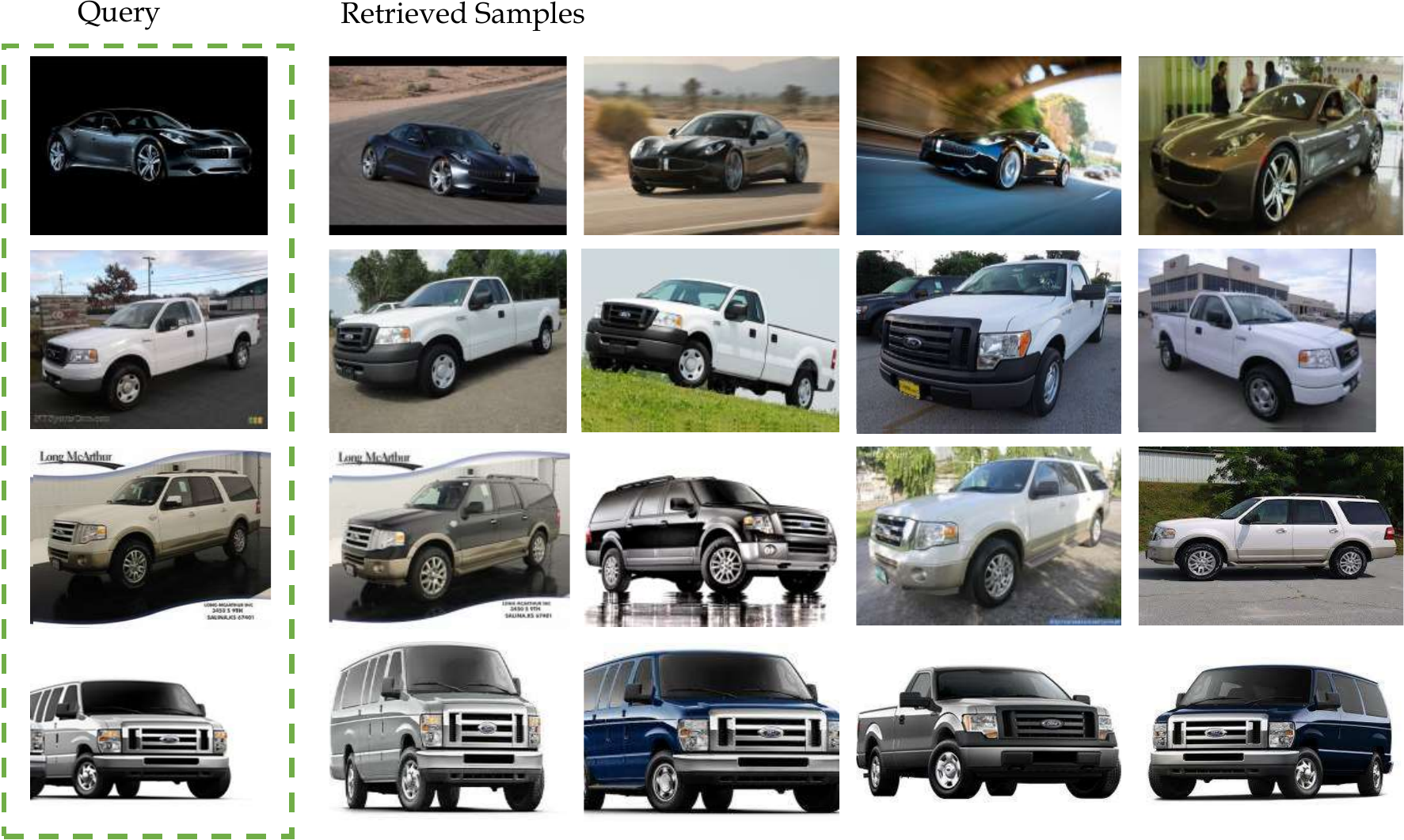}
	\caption{Illustrate the retrieved examples on CARS196 test set. Ranked by similarity score.}
	\label{fig:example_4}
	\vspace{-10pt}
\end{figure}

\end{document}